\newtheorem{theorem}{Theorem}
\newtheorem{lemma}[theorem]{Lemma}
\newcommand\bg{\bm{g}}
\newcommand\bn{\bm{n}}
\newcommand\bx{\bm{x}}
\newcommand\by{\bm{y}}
\newcommand\bz{\bm{z}}
\newcommand\method{CDSD}
\def\1{\bm{1}}
\def\vf{{\bm{f}}}
\def\vg{{\bm{g}}}
\def\vv{{\bm{v}}}
\def\vx{{\bm{x}}}
\def\vz{{\bm{z}}}
\DeclareMathAlphabet{\mathsfit}{\encodingdefault}{\sfdefault}{m}{sl}
\SetMathAlphabet{\mathsfit}{bold}{\encodingdefault}{\sfdefault}{bx}{n}
\def\sP{{\mathbb{P}}}
\def\sR{{\mathbb{R}}}
\newcommand{\sigmoid}{\sigma}
\definecolor{mydarkblue}{rgb}{0,0.08,0.45}
\Crefname{equation}{Equation}{Equations}
\Crefname{section}{Section}{Sections}
\title{Causal Representation Learning in Temporal Data via Single-Parent Decoding}
\author{
Philippe Brouillard$^{1, 2}$, Sébastien Lachapelle$^{3}$, Julia Kaltenborn$^{2, 4}$, Yaniv Gurwicz$^{5}$, \\
\textbf{Dhanya Sridhar}$^{1, 2}$, \textbf{Alexandre Drouin}$^{2, 6}$, \textbf{Peer Nowack}$^{7}$, \textbf{Jakob Runge}$^{8}$ \textbf{\& David Rolnick}$^{2, 4}$ \\
$^1$Université de Montréal; $^2$Mila - Quebec AI Institute; $^3$Samsung - SAIT AI Lab, Montreal \\
$^4$McGill University; $^5$Intel Labs; $^6$ServiceNow Research; $^7$Karlsruhe Institute of Technology \\
$^8$Technische Universität Dresden
}
\begin{document}

\maketitle

\begin{abstract}

Scientific research often seeks to understand the causal structure underlying high-level variables in a system. For example, climate scientists study how phenomena, such as El Niño, affect other climate processes at remote locations across the globe. However, scientists typically collect low-level measurements, such as geographically distributed temperature readings. From these, one needs to learn both a mapping to causally-relevant latent variables, such as a high-level representation of the El Niño phenomenon and other processes, as well as the causal model over them. The challenge is that this task, called causal representation learning, is highly underdetermined from observational data alone, requiring other constraints during learning to resolve the indeterminacies. In this work, we consider a temporal model with a sparsity assumption, namely single-parent decoding: each observed low-level variable is only affected by a single latent variable. Such an assumption is reasonable in many scientific applications that require finding groups of low-level variables, such as extracting regions from geographically gridded measurement data in climate research or capturing brain regions from neural activity data. We demonstrate the identifiability of the resulting model and propose a differentiable method, \emph{Causal Discovery with Single-parent Decoding} (CDSD), that simultaneously learns the underlying latents and a causal graph over them. We assess the validity of our theoretical results using simulated data and showcase the practical validity of our method in an application to real-world data from the climate science field.

\end{abstract}

\section{Introduction} \label{sec:intro}

In scientific domains, we often seek to learn causal relationships between high-level variables. For example, climate scientists want to understand how major modes of climate variability, such as the El Niño Southern Oscillation (ENSO) affect weather patterns worldwide~\citep{runge2015identifying, runge2019inferring, nowack2020causal}. Neuroscientists want to uncover how different brain regions may be defined and influence one another~\citep{reid2019advancing}. Identifying true causal links in a network of correlations is challenging in itself, but to compound the difficulty, scientists typically collect low-level and noisy measurements in place of causally relevant high-level variables. For example, instead of recording the presence or absence of ENSO and its global impact, climate scientists measure sea-surface temperatures at many locations. Instead of measuring overall communication between brain regions, neuroscientists must work with proxy information such as blood flow or electrical activity in specific locations. Thus, scientific discovery requires causal representation learning: the coupled tasks of learning latent variables that represent semantically meaningful abstractions of the observed measurements and the quantification of causal relationships among these latents~\citep{scholkopf2021toward}.

What makes causal representation learning particularly challenging from a theoretical perspective is the non-identifiability of the models: there are typically many solutions -- mappings from observations to latents -- that fit the observed measurements equally well.
Of these many alternatives, only some disentangled solutions capture the semantics of the true latents while the other
solutions entangle the latents, changing their semantics and making it impossible to then infer the causal relationships among the latents.
As such, a key focus of causal representation learning is identifying the latents up to disentangled solutions using various inductive biases.

In this paper, we introduce a causal representation learning method for temporal observations, \emph{Causal Discovery with Single-parent Decoding} (\method{}), a fully differentiable method that not only recovers disentangled latents, but also the causal graph over these latents. The assumption underlying \method{}, that is crucial for identifiability, involves highly sparse mappings from latents to observations:  each observed variable e.g., sea-level pressure at a given grid location on Earth, is a nonlinear function of a single latent variable. We call this \emph{single-parent decoding}. While this condition is strong, such assumptions have given rise to interpretable latent variables models for gene expression~\citep{bing2020adaptive}, text~\citep{arora2013practical}, and brain imaging data~\citep{monti2018unified}. Although single-parent decoding may not fit the needs of some analyses (e.g., images), it leads to scientifically-meaningful groupings of observed variables for many scientific applications. 
For example, in climate science, the sparse mapping corresponds to latent spatial zones, each exhibiting similar weather patterns or trends in their climate. 

A key innovation of this paper is that, with our sparse mapping assumption, we can identify the latents up to some benign indeterminacies (e.g., permutations) as well as the temporal causal graph over the latents. We prove these identifiability results theoretically, and verify empirically that they hold in simulated data.
Furthermore, we demonstrate the practical relevance of our method and assumptions via an application to a real-world climate science task.
Our results indicate that \method{} successfully partitions climate variables into geographical regions and proposes plausible \emph{teleconnections} between them -- remote interactions between distant climate or weather states \citep{zhou2015teleconnection} that have long been a target for climate scientists.

\textbf{Contributions.} 
\begin{enumerate}
    \item We propose a differentiable causal discovery approach that simultaneously learns both latent variables and a causal graph over the latents, based on time-series data. (\cref{sec:method})
    \item We prove that the single-parent decoding assumption leads to the identifiability of both the latent representation and its causal graph. (\cref{sec:identifiability}, \cref{prop:IdentVPerm})
    \item We evaluate our method both on synthetic data and a real-world climate science dataset in which relevant latents must be uncovered from measurements of sea-level pressure. (\cref{sec:experiments})
\end{enumerate}

\section{Related Work}

\textbf{Causal discovery from time-series data.} 
Many causal discovery methods have been proposed for time-series data \citep{runge2019inferring,runge2023causal}.
Constraint-based approaches, such as tsFCI~\citep{entner2010causal}, PCMCI+~\citep{runge2020discovering} and TS-ICD~\citep{rohekar2023from}, learn an equivalence class of directed acyclic graphs by iterative conditional independence testing.
The proposed method is part of a line of work of score-based causal discovery methods that require a likelihood function to score each graph given the data. While standard score-based methods operate on a discrete search space of acyclic graphs (or Markov equivalence classes) that grows exponentially with the number of variables, continuous score-based methods enforce acyclicity only through a continuous acyclicity constraint, proposed by \citet{zheng2018dags}. Some variants of these methods have been proposed specifically to handle time-series data with instantaneous connections~\citep{pamfil2020dynotears, sun2021nts, gao2022idyno}.  
However, in contrast with \method{}, all the methods mentioned above do not address the problem of learning a latent representation.

\textbf{Causal representation learning.} Recently, the field of causal representation learning \citep{scholkopf2021toward} has emerged with the goal of learning, from low-level data, representations that correspond to actionable quantities in a causal structure.\footnote{As a side note, the general idea of aggregating several low-level observations in order to only consider causal relationships at a high level is somewhat reminiscent of causal discovery with typed variables~\citep{brouillard2022typing}, \textit{causal abstractions} \citep{rubenstein2017causal, beckers2020approximate} and \textit{causal feature learning}~\citep{chalupka2017causal} which was also applied to climate science \citep{chalupka2016unsupervised}.} Since disentangling latent variables is impossible from independent and identically distributed samples \citep{hyvarinen1999nonlinear,locatello2019challenging}, existing works learn causal representations with weak supervision from paired samples \citep{ahuja2022weakly,brehmer2022weakly,locatello2020weakly,vonkugelgen2021selfsupervised,GreRubMehLocSch19}, auxiliary labels \citep{khemakhem2020variational,khemakhem2020ice,lachapelle2022synergies, hyvarinen_nonlinear_2017,hyvarinen_nonlinear_2019,hyvarinen2016unsupervised}, and temporal observations \citep{lachapelle2022disentanglement,lippe_citris_2022,klindt2021towards,yao2022learning}, or by imposing constraints on the map from latents to observations \citep{moran2022identifiable,rhodes2021local,zheng2022identifiability}. 

This paper fits into the last category of work on sparse decoding, which constrains each observed variable to be related to a sparse set of latent parents, either linearly~\citep{chen2022identification, monti2018unified,bhattacharya2011sparse,knowles2011nonparametric} or nonlinearly \citep{moran2022identifiable,zheng2022identifiability,rhodes2021local}. 
In comparison, the \emph{single-parent decoding} assumption that we use imposes a stronger form of sparsity, similar to some work on factor analysis \citep{silva2006learning,monti2018unified,xie2023causal,kummerfeld2016causal}. In contrast, this paper develops an identifiable single-parent decoding model that is nonlinear and scales well with high-dimensional observations. The line of work on independent mechanism analysis~\citep{gresele2021independent, reizinger2022embrace, buchholz2022function} is also related to our identifiability result. The class of single-parent decoders we propose in this work is a subset of the class of decoders with Jacobians consisting of orthogonal columns. 
This work contributes to identification results in this category of research, a task which has proven to be challenging.

Finally, this paper also relates to Varimax-PCMCI~\citep{tibau2022spatiotemporal}, a method that, unlike causal representation learning, learns the latent variables and their causal graph in two separate stages. This method first applies \textit{Principal Component Analysis} (PCA) and a Varimax rotation \citep{kaiser1958varimax} to learn latent variables, as demonstrated in \citep{nowack2020causal, runge2015identifying}, and then applies PCMCI~\citep{runge2019detecting}, a temporal constraint-based causal discovery method, to recover the causal graph between the latents. In contrast, \method{} learns the latents and their temporal causal graph simultaneously via score-based structure learning, admitting nonlinearity in the relationships between latents as well as the mapping from latents to observations.
Although Mapped-PCMCI supports nonlinear relationships between the latents, it does this via nonlinear conditional independence tests, which do not scale well \citep{zhang2012kernel,strobl2019approximate,shah2020hardness}. 
Nevertheless, we directly compare \method{} with Varimax-PCMCI in the experiments in \cref{sec:experiments}.

\section{Causal Discovery with Single-Parent Decoding } 
\label{sec:method}

\begin{figure}[t]
    \centering
    \includegraphics[width=0.75\textwidth]{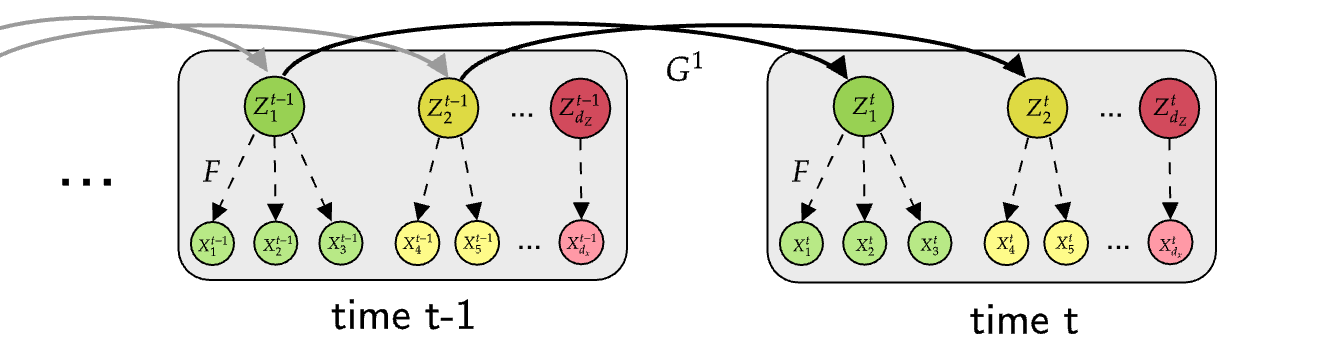}
    \caption{\label{fig:generative_model} In the proposed generative model, the variables $\bz$ are latent and $\bx$ are observable variables. $G^k$ represents the connections between the latent variables, and $F$ the connections between the latents and the observables (dashed lines). The colors represent the different groups. For clarity, we illustrate here connections only up to $G^1$, but our method also leverages connections of higher order.}
\end{figure}

We consider the time series model illustrated in \cref{fig:generative_model}.
We observe $d_x$-dimensional variables $\{\bx^t\}^T_{t=1}$ at $T$ time steps. 
The observed variables $\bx^t$ are a function of $d_z$-dimensional latent variables $\bz^t$.
For example, the observations $\bx^t$ might represent temperature measurements at $d_x$ grid locations on Earth while the latents $\bz^t$ might correspond to unknown region-level temperature measurements.

We consider a stationary time series of order $\tau$ (i.e., $\tau$ is the maximum number of past observations that can affect the present observation) over the latent variables $\bz^1, \ldots, \bz^T$.
Thus, we model the relationship between the latents at time $t$, $\bz^t$, and those at each of the $\tau$ previous time steps using binary matrices $\left\{G^k\right\}_{k=0}^\tau$ that represent causal graphs between the latent variables and their past states. That is, each matrix $G^k \in \{0, 1\}^{d_z \times d_z}$ encodes the presence of lagged relations between the timestep $t-k$ and the present timestep $t$, i.e., $G^{k}_{ij} = 1$ if and only if $z^{t-k}_j$ is a causal parent of $z^{t}_i$. 
In what follows, we assume that there are no instantaneous causal relationships, i.e., the latents at time $t$ have no edges between one another in $G^0$ (see \cref{app:instantaneous} for a relaxation).

Finally, $F$ is the adjacency matrix of the bipartite causal graph with directed arrows from the latents $\bz$ to the variables $\bx$. We assume that $F$ has a specific structure: the \emph{single-parent decoding} structure, where each variable $x_i$ has at most one latent parent. That is, the set of latent parents $\bz_{pa_i^F}$ of each $x_i$, where $pa_i^F$ is the set of indices of the parents in graph $F$, is such that $|pa_i^F| \leq 1$.

\subsection{Generative Model}\label{sec:gen_mod}

We now describe the model in detail that can be used to generate synthetic data. 

\textbf{Transition model.} The transition model defines the relations between the latent variables $\bz$. We suppose that, at any given time step $t$, the latents are independent given their past:
\begin{equation} \label{eq:facto_transition}
   p(\bz^t \mid \bz^{<t}) := \prod_{j=1}^{d_z} p(z^t_{j} \mid \bz^{<t}),
\end{equation}
where the notation $\bz^{<t}$ is equivalent to $\bz^{t-1}, \dots, \bz^{t-\tau}$. Each conditional is parameterized by a nonlinear function that depends on its parents:
\begin{equation} \label{eq:conditional_transition}
    p(z^t_{j} \mid \bz^{<t}) := \\ h(z^t_{j}; \,\,g_j([G^1_{j :} \odot \bz^{t-1}, \dots, G^\tau_{j :} \odot \bz^{t - \tau}])\,),
\end{equation}
where the bracket notation denotes the concatenation of vectors, $g_j$ denotes transition functions, $G_{j:}$ is the $j$-th row of the graph $G$, $\odot$ is the element-wise product, and $h$ is a density function of a continuous variable with support $\mathbb{R}$ parameterized by the outputs of $g_j$. In our experiments, $h$ is a Gaussian density although our identifiability result (\Cref{prop:IdentV}) requires only that $h$ has full support. %

\textbf{Observation model.} The observation model defines the relationship between the latent variables $\bz$ and the observable variables $\bx$. We assume conditional independence of the $x_j^t$:
\begin{equation} \label{eq:conditional_observation}
   p(\bx^t \mid \bz^t) := \prod_{j=1}^{d_x} p(x^t_{j} \mid \vz^t_{pa^F_j}); \,\,\,\,\,\, 
    p(x^t_j \mid \bz^t_{pa^F_j}) := \mathcal{N}(x^t_j; f_j(\bz^t_{pa^F_j}), \sigma^2_j),
\end{equation}

where $f_j: \mathbb{R} \rightarrow \mathbb{R}$, and $\bm{\sigma}^2 \in \mathbb{R}^{d_x}_{> 0}$ are decoding functions. As previously mentioned, we assume a specific structure of $F$, namely that $|pa_j^F| \leq 1$ for all nodes $x_j$. In the next section, we will present a way to enforce this structure.

\textbf{Joint distribution.} The complete density of the model is thus given by:
\begin{equation} \label{eq:facto_density}
    p(\bx^{\leq T}, \bz^{\leq T}) := \prod_{t=1}^T p(\bz^t \mid \bz^{<t}) p(\bx^t \mid \bz^t)\, .
\end{equation}

\subsection{Evidence Lower Bound} \label{sec:estimation}
The model can be fit by maximizing $p(\bx^{\leq T}) = \int p(\bx^{\leq T}, \bz^{\leq T}) \, d\bz^{\leq T}$, which unfortunately involves an intractable integral.
Instead, we rely on variational inference and optimize an \textit{evidence lower bound} (ELBO) for $p(\bx^{\leq T})$, as is common to many instantiations of temporal \textit{variational auto-encoders} (VAEs) (see \citet{girin2020dynamical} for a review).

We use $q(\bz^{\leq T} \mid \bx^{\leq T})$ as the variational approximation of the posterior $p(\bz^{\leq T} \mid \bx^{\leq T})$:
\begin{equation} \label{eq:conditional_posterior}
    q(\bz^{\leq T} \mid \bx^{\leq T}) := \prod_{t=1}^T q(\bz^t \mid \bx^t); \,\,\,\,\,    q(\bz^t \mid \bx^t) := \mathcal{N}(\bz^t; \tilde{\bm{f}}(\bx^t), \text{diag}(\tilde{\bm{\sigma}}^2)),
\end{equation}
where $\tilde{\bm{f}}: \mathbb{R}^{d_x} \rightarrow \mathbb{R}^{d_z}$ and $\tilde{\bm{\sigma}}^2 \in \mathbb{R}^{d_z}_{>0}$ are the encoding functions.

Using the approximate posterior and the generative model from \cref{sec:gen_mod}, we get the ELBO:
\begin{equation} \label{eq:elbo_climate}
    \log p(\bx^{\leq T}) \geq \sum_{t=1}^T \Bigl[ \mathbb{E}_{\bz^t \sim q(\bz^t \mid \bx^t)}\left[\log p(\bx^t \mid \bz^t)\right] - \\
    \mathbb{E}_{\bz^{<t} \sim q(\bz^{<t} \mid \bx^{<t})} \text{KL}\left[q(\bz^t \mid \bx^t) \,||\, p(\bz^t \mid \bz^{< t})\right] \Bigr] ,
\end{equation}
where KL stands for the Kullback-Leibler divergence. We show explicitly the derivation of this ELBO in \cref{app:elbo_derivation}.

\subsection{Inference} \label{sec:objective}

We now present some implementation choices and our optimization problem of interest, namely maximizing the ELBO defined in \cref{eq:elbo_climate} with respect to the different parameters of our generative model.  

\textbf{Latent-to-observable graph.~~}We parameterize $F$, the graph between the latent $\bz$ and the observable $\bx$, using a weighted adjacency matrix $W \in \mathbb{R}^{d_x \times d_z}_{\geq 0}$.
Put formally, $W_{ij} > 0$ if and only if $x_{i}$ is a child of $z_j$.
In order to enforce the single-parent decoding assumption for $F$, we follow \citet{monti2018unified} and constrain $W$ to be non-negative and have columns that are orthonormal vectors.\footnote{Note that, to simplify, we will sometimes say that $W$ is \emph{orthogonal} even if it is not a square matrix; by that, we specifically mean that its columns are orthonormal vectors.}
From these constraints our single-parent decoding assumption follows: at most one entry per row of $W$ can be non-zero, i.e., a given $x_i$ can have at most one parent.  As stated earlier, these constraints on $W$ are essential since they ensure that $W$ is identifiable up to permutation (we elaborate on identifiability in \cref{sec:identifiability}).

\textbf{Encoding/decoding functions.~~}We parameterize the decoding functions $f_j$ in \cref{eq:conditional_observation} with a neural network $r_j$ whose input is filtered using $W$ as a mask: $f_j(\bz^t_{pa^F_j}) = r_j(W_{j:} \bz^t)$. In \cref{app:implementation} we show an architecture for the functions $r_j$ that leverages parameter sharing using only one neural network. For all experiments in the linear setting, we take $r_j$ to be the identity function as in \citet{monti2018unified}.
The encoding function $\tilde{\bm{f}}$ (\cref{eq:conditional_posterior}) and the functions $g_j$ from the transition model (\cref{eq:conditional_transition}) are also parameterized using neural networks.

\textbf{Continuous optimization.~~}We use $\bm{\phi}$ to denote the parameters of all neural networks ($r_j$, $g_j$, $\tilde{\bm{f}}$) and the learnable variance terms at Equations~\ref{eq:conditional_observation} and \ref{eq:conditional_posterior}. To learn the graphs $G^k$ via continuous optimization, we use a similar approach to \citet{ke2019learning, brouillard2020differentiable, ng2022masked}, where the graphs are sampled from distributions parameterized by $\Gamma^k \in \mathbb{R}^{d_z \times d_z}$ that are learnable parameters. Specifically, we use $G^k_{ij} \sim Bernoulli(\sigma(\Gamma^k_{ij}))$, where $\sigma(\cdot)$ is the sigmoid function. To simplify the notation, we use $G$ and $\Gamma$ as the sets $\{G^1, \dots, G^\tau \}$ and $\{ \Gamma^1, \dots, \Gamma^\tau \}$ in the remainder of the presentation. This results in the following constrained optimization problem:
\begin{equation}\label{eq:const-opt-prob}
\begin{aligned}
    & \max_{W, \Gamma, \bm{\phi}} \mathbb{E}_{G \sim \sigma(\Gamma)} \bigl[\mathbb{E}_{\bx}\left[\mathcal{L}_{\bx}(W, \Gamma, \bm{\phi}) \right] \bigr] - \lambda_s ||\sigma(\Gamma)||_1 \\
    & \text{s.t.} \ \ W \ \text{is orthogonal and non-negative} ,
\end{aligned}
\end{equation}

where $\mathcal{L}_{\bx}$ is the ELBO corresponding to the right-hand side term in \cref{eq:elbo_climate} and $\lambda_s > 0$ is a coefficient for the regularisation of the graph sparsity. To enforce the non-negativity of $W$, we use the projected gradient on $\mathbb{R}_{\geq 0}$ (see \cref{app:projected_grad}). 
As for the orthogonality of $W$, we enforce it using the following constraint:
\begin{equation*}
    h(W) := W^T W - I_{d_z} \ .
\end{equation*}
We relax the constrained optimization problem by using the \textit{augmented Lagrangian method} (ALM), which amounts to adding a penalty term to the objective and incrementally increasing its weight during training (\citep{nocedal1999numerical}; see \cref{app:alm}). Hence, the final optimization problem is:
\begin{equation} 
\label{eq:full_objective}
    \max_{W, \Gamma, \bm{\phi}} \mathbb{E}_{G \sim \sigma(\Gamma)} \bigl[\mathbb{E}_{\bx}[\mathcal{L}_{\bx}(W, \Gamma, \bm{\phi})]\bigr] \\ - \lambda_s ||\sigma(\Gamma)||_1 -  \text{Tr}\left(\lambda_W^T h(W)\right) - \frac{\mu_W}{2} || h(W) ||^2_2 ,
\end{equation}
where $\lambda_W \in \mathbb{R}^{d_z \times d_z}$ and $\mu_W \in \mathbb{R}_{> 0}$ are the coefficients of the ALM. 

We use stochastic gradient descent to optimize this objective. To estimate the gradients w.r.t. the parameters $\Gamma$, we use the Straight-Through Gumbel estimator~\citep{maddison2016concrete, jang2016categorical}.
In the forward pass, we sample $G$ from the Bernoulli distributions,  while in the backward pass, we use the Gumbel-Softmax samples. This estimator was successfully used in several causal discovery methods~\citep{kalainathan2018sam, brouillard2020differentiable, ng2022masked}. For the ELBO optimization, we follow the classical VAE models~\citep{kingma2013auto} by using the reparametrization trick and a closed-form expression for the KL divergence term since both $q(\bz^t \mid \bx^t)$ and $p(\bz^t \mid \bz^{< t})$ are multivariate Gaussians. Using these tricks we can learn the graphs $G$ and the matrix $W$ end-to-end. For a more detailed exposition of the implementation, such as the neural network's architecture, see \cref{app:implementation}. 
For an extension of this approach to support instantaneous relations, see \cref{app:instantaneous}.

\subsection{Identifiability Analysis} \label{sec:identifiability}
In this section, we discuss the identifiability of the model specified in \cref{sec:gen_mod}. %
Put informally, we show that any solution that fits the ground-truth exactly recovers the true latents $\bz^t$ up to permutation and coordinate-wise transformations, i.e., transformations that preserve the semantics of the latents and admit valid causal discovery.

To formalize identifiability, we first state an important result that allows to show that two models expressing the same distribution over observations must have i) the same decoder image and ii) the relationship between latent representations must be a diffeomorphism. Similar results have been show in previous literature~\citep{ivae, lachapelle2022disentanglement, Kivva2022Identifiability,ahuja2022properties}, and for completeness' sake we state it and its proof in Appendix~\ref{app:identifiability}. For conciseness, we use $\vf, \bg$ as the concatenations of functions $[f_1, \dots, f_{d_x}]$ and $[g_1, \dots, g_{d_z}]$. 

\begin{restatable}[Identifiability of $\vf$ and $p(\vz^{\leq T})$ up to diffeomorphism]{proposition}{IdentV}\label{prop:IdentV}
Assume we have two models $p(\bx^{\leq T}, \bz^{\leq T})$ and $\hat p(\bx^{\leq T}, \hat \bz^{\leq T})$ as specified in \cref{sec:gen_mod} with parameters $(\vg, \vf, G, \sigma^2)$ and $(\hat\vg, \hat\vf, \hat G, \hat\sigma^2)$, respectively. Assume further that $\vf$ and $\hat\vf$ are diffeomorphisms onto their respective images and that $d_z < d_x$ (we do not assume single-parent decoding). Therefore, whenever $\int p(\bx^{\leq T}, \bz^{\leq T})d\bz^{\leq T} = \int \hat p(\bx^{\leq T}, \hat \bz^{\leq T})d\hat \bz^{\leq T}$ for all $\bx^{\leq T}$, we have $\vf(\sR^{d_z}) = \hat\vf(\sR^{d_z})$ and $\vv := \vf^{-1} \circ \hat\vf$ is a diffeomorphism. Moreover, the density of the ground-truth latents $p(\vz^{\leq T})$ and the density of the learned latents $\hat p(\hat\vz^{\leq T})$ are related via
$$p(\vv(\hat\vz^{\leq T})) \prod_{t=1}^T|\det D\vv(\hat\vz^t)| = \hat p(\hat\vz^{\leq T}),\ \forall \hat\vz^{\leq T} \in \sR^{d_z \times T}\, .$$
\end{restatable}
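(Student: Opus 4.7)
The plan is to adapt the standard identifiability template for additive-Gaussian-noise decoders (in the spirit of \citet{ivae, Kivva2022Identifiability}) to the temporal setting. The idea is to deconvolve the observation noise to recover the pushforward of the latent distribution through the decoder, identify the two decoder images as the support of this common measure, and then apply a change-of-variables formula.

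\textbf{Setup and deconvolution.} First I would observe that \cref{eq:conditional_observation} (without the single-parent restriction) makes $\bx^t\mid\bz^t$ product-Gaussian with mean $\vf(\bz^t)$ and covariance $\mathrm{diag}(\bm\sigma^2)$. Marginalizing the latents, the observation density becomes a Gaussian convolution $p(\bx^{\leq T}) = \mathcal{N}\bigl(\cdot\,;\mathbf{0}, I_T\otimes\mathrm{diag}(\bm\sigma^2)\bigr)*\mu$, where $\mu$ is the pushforward of $p(\bz^{\leq T})$ by the coordinate-wise map $\vf^{\otimes T}$, a Borel probability measure supported on $\vf(\sR^{d_z})^T\subset\sR^{d_x\times T}$; the analogous statement holds for $\hat p$ with $\hat\vf,\hat{\bm\sigma}^2,\hat\mu$. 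Taking characteristic functions on both sides of $p(\bx^{\leq T})=\hat p(\bx^{\leq T})$, the Gaussian factor $\exp(-\tfrac{1}{2}\sum_t\bk_t^\top\mathrm{diag}(\bm\sigma^2)\bk_t)$ has super-polynomial decay in the frequencies whereas the characteristic function of any probability measure is bounded by one; comparing tail behaviour forces $\bm\sigma^2=\hat{\bm\sigma}^2$, and uniqueness of the Fourier transform then yields $\mu=\hat\mu$ as Borel measures on $\sR^{d_x\times T}$.

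\textbf{Image coincidence and change of variables.} Because each conditional $h$ has full support on $\sR$, the joint $p(\bz^{\leq T})$ has full support on $\sR^{d_z\times T}$, so $\mathrm{supp}(\mu)=\overline{\vf(\sR^{d_z})^T}$ and likewise for $\hat\mu$. The diffeomorphism hypothesis makes $\vf(\sR^{d_z})$ and $\hat\vf(\sR^{d_z})$ embedded $d_z$-dimensional submanifolds of $\sR^{d_x}$; identifying each image as the smooth part of the common support then forces $\vf(\sR^{d_z})=\hat\vf(\sR^{d_z})$, so that $\vv:=\vf^{-1}\circ\hat\vf$ is a well-defined diffeomorphism of $\sR^{d_z}$. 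Finally, pulling the measure equality $\mu=\hat\mu$ back along $\vf^{\otimes T}$, itself a diffeomorphism onto $\vf(\sR^{d_z})^T$, identifies $p(\bz^{\leq T})$ with the pushforward $(\vv^{\otimes T})_*\hat p(\hat\bz^{\leq T})$ as densities on $\sR^{d_z\times T}$, and the classical change-of-variables formula yields the displayed relation.

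\textbf{Main obstacle.} The delicate step is going from equality of observational laws to equality of the pushforwards $\mu,\hat\mu$ and then to literal equality of the decoder images. Since $d_z<d_x$, $\mu$ is concentrated on a lower-dimensional submanifold of $\sR^{d_x\times T}$ and carries no Lebesgue density, so the deconvolution must be carried out at the characteristic-function level rather than pointwise on densities. Likewise, concluding that $\vf(\sR^{d_z})=\hat\vf(\sR^{d_z})$ rather than mere equality of closures (or agreement up to a null set) requires exploiting that $\vf,\hat\vf$ are genuine diffeomorphisms and comparing the smooth loci of the common support; this is the step where one has to be most careful, and where earlier works (e.g.\ \citet{ivae,Kivva2022Identifiability,lachapelle2022disentanglement,ahuja2022properties}) supply the technical lemmas that \cref{prop:IdentV} can import.
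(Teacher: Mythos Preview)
Your proposal follows the same route as the paper's proof: deconvolve the Gaussian observation noise via characteristic functions to identify the pushforward measures $\mu=\hat\mu$, read off equality of the decoder images from equality of supports, and finish with the change-of-variables formula. The one step where your sketch is looser than the paper is the claim that ``comparing tail behaviour forces $\bm\sigma^2=\hat{\bm\sigma}^2$'': boundedness of characteristic functions alone does not pin down the noise covariance (take $\mu$ Gaussian and $\hat\mu$ a Dirac mass to see that a pure decay-rate comparison fails). The paper's Lemma~\ref{lem:denoise} makes the hypothesis $d_z<d_x$ do the work at precisely this point: if the noise variances differ, then one pushforward equals a nondegenerate Gaussian convolution of the other and hence has full support in $\sR^{Td_x}$, contradicting its concentration on a $Td_z$-dimensional manifold. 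Your ``smooth-locus'' manoeuvre for image equality is also more elaborate than needed; the paper simply marginalizes $\mu=\hat\mu$ to a single time step and equates supports directly, using that $p(\bz^1)$ has full support on $\sR^{d_z}$.
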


The following paragraphs discuss how the structure in $F$ can be leveraged to show that $\vv$ must be a trivial indeterminacy like a permutation composed with element-wise transformations.

\textbf{Identifiability via the single-parent structure of $F$.} The following proposition can be combined with \Cref{prop:IdentV} to show that the model specified in \cref{sec:gen_mod} with the single-parent decoding structure has a representation that is identifiable up to permutation and element-wise invertible transformations. The proof of this result can be found in \cref{app:identifiability}.

\begin{restatable}[Identifying latents of $\vf$]{proposition}{IdentVPerm}\label{prop:IdentVPerm} Let $\vf: \sR^{d_z} \rightarrow \sR^{d_x}$ and $\hat\vf: \sR^{d_z} \rightarrow \sR^{d_x}$ be two diffeomorphisms onto their image $\vf(\sR^{d_z}) = \hat \vf(\sR^{d_z})$. Assume both $\vf$ and $\hat\vf$ have a single-parent decoding structure, i.e. $|pa_j^F| \leq 1$ and $|pa_j^{\hat F}| \leq 1$. Then, the map $\vv := \vf^{-1} \circ \hat\vf$ has the following property: there exists a permutation $\pi$ such that, for all $i$, the function $\vv_i(\vz)$ depends only on $\vz_{\pi(i)}$.
\end{restatable}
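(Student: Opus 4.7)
The plan is to differentiate the identity $\vf \circ \vv = \hat\vf$ (which follows from $\vv := \vf^{-1} \circ \hat\vf$) and to exploit the single-parent structure of both $\vf$ and $\hat\vf$. The chain rule yields $J\vf(\vv(\vz))\, J\vv(\vz) = J\hat\vf(\vz)$ for every $\vz \in \sR^{d_z}$, and by the single-parent assumption each row of $J\vf$ and of $J\hat\vf$ has at most one nonzero entry. Moreover, since $\vf$ and $\hat\vf$ are diffeomorphisms onto their images, each Jacobian has full column rank at every point; combined with the single-parent row structure, this forces every column of $J\vf$ (and of $J\hat\vf$) to contain at least one nonzero entry at every point.

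I would next do a pointwise argument. Fix $\vz^\star \in \sR^{d_z}$ and write $A := J\vf(\vv(\vz^\star))$, $B := J\hat\vf(\vz^\star)$, $V := J\vv(\vz^\star)$, so $B = AV$. For each latent index $i \in \{1, \dots, d_z\}$, the full column rank of $A$ guarantees a row index $j_i$ with $A_{j_i, i} \neq 0$, and the single-parent structure forces $A_{j_i, i'} = 0$ for $i' \neq i$. Reading off row $j_i$ of $B = AV$ gives $B_{j_i, :} = A_{j_i, i}\, V_{i, :}$, so $V_{i, :}$ is proportional to row $j_i$ of $B$, which itself has at most one nonzero entry. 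Thus every row of $V$ has at most one nonzero entry, and since $\vv$ is a composition of diffeomorphisms between open subsets of $\sR^{d_z}$, $V$ is invertible; hence $V$ is in fact a generalized permutation matrix at $\vz^\star$.

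The last step is to promote this pointwise structure to a single global permutation via connectedness. For each pair $(i, k)$ define the open set $U_{ik} := \{\vz \in \sR^{d_z} : (\partial \vv_i / \partial \vz_k)(\vz) \neq 0\}$, which is open by continuity of partial derivatives. From the pointwise analysis, for fixed $i$ the family $\{U_{ik}\}_{k=1}^{d_z}$ is pairwise disjoint, and by invertibility of $V$ at every point their union equals $\sR^{d_z}$. As $\sR^{d_z}$ is connected, exactly one such $U_{i,\pi(i)}$ equals all of $\sR^{d_z}$ and the others are empty. Hence $\partial \vv_i / \partial \vz_k \equiv 0$ whenever $k \neq \pi(i)$, so $\vv_i$ depends only on $\vz_{\pi(i)}$. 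The map $\pi$ is a bijection because at any single point the row-supports of the invertible $V$ must be distinct (otherwise two rows of $V$ would have parallel support, contradicting full rank).

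The main obstacle is the globalization step: the linear-algebra argument at a point yields a generalized-permutation pattern for $J\vv$ only up to a permutation that a priori depends on the chosen point. The crux is noticing that (i) openness of each $U_{ik}$ comes from continuity of partial derivatives, (ii) pairwise disjointness within a fixed row $i$ comes from the single-parent/pointwise analysis, and (iii) their union covering $\sR^{d_z}$ comes from invertibility of $V$ everywhere; connectedness of $\sR^{d_z}$ then rigidifies this into a single global permutation. The preceding Jacobian and linear-algebra steps are routine once the single-parent row structure is combined with the full-rank condition delivered by the diffeomorphism hypothesis.
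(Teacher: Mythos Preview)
Your proof is correct and is in fact more streamlined than the paper's. Both arguments begin by differentiating $\hat\vf = \vf \circ \vv$ to obtain $B = AV$ with $A = D\vf(\vv(\vz))$, $B = D\hat\vf(\vz)$, $V = D\vv(\vz)$, and both end with a continuity/connectedness argument to globalize the pointwise permutation. The difference is in the middle step: the paper left-multiplies by the Moore--Penrose inverse of $B$, shows $B^\top B$ is diagonal from the single-parent structure, obtains $V^{-1} = (B^\top B)^{-1} B^\top A$, then invokes the Leibniz determinant formula to find a permutation along which $B^\top A$ is nonzero, and finally runs a contradiction argument comparing ranks of $2\times d_z$ submatrices to rule out any other nonzero entries. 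Your route avoids pseudo-inverses, determinants, and the rank-contradiction entirely: you pick, for each column $i$ of $A$, a row $j_i$ that isolates that column (full column rank plus single-parent rows), and read off $V_{i,:} = B_{j_i,:}/A_{j_i,i}$ directly, which is a single-parent row of $B$. This is a strictly simpler linear-algebra observation yielding the same conclusion. Your connectedness argument for globalizing $\pi$ is also a cleaner, fully explicit version of what the paper summarizes in one sentence (``that would violate the fact that $D\vv(\vz)$ is continuous'').
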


Can we identify the causal graph $G$ over latent variables? The above result, combined with Proposition~\ref{prop:IdentV}, shows that we can identify the distribution $p(\vz^{\leq T})$ up to permutation and trivial reparameterizations. The question then reduces to ``can we identify the causal graph $G$ from $p(\vz^{\leq T})$?'', which is the central question of causal discovery. It is well-known that, in the absence of instantaneous causal connections, a temporal causal graph can be identified from the observational distribution~\citep[Theorem 10.1]{peters2017elements}. It is thus possible, without instantaneous connections, to identify $G$ (up to permutation) from $p(\vx^{\leq T})$.

\section{Experiments} \label{sec:experiments}

We empirically study the performance of \method{} on a number of linear and nonlinear settings using synthetic datasets.
First, we compare \method{} to Varimax-PCMCI~\citep{tibau2022spatiotemporal}, an alternate method that has a closely similar application.
The results, reported at \cref{sec:exp-simulated}, emphasize the advantages of \method{} over Varimax-PCMCI, especially in nonlinear settings.
Next, we show that \method{} also compares favorably to identifiable representation methods (iVAE~\cite{khemakhem2020variational} and DMS~\cite{lachapelle2022disentanglement}) on synthetic data that respect the single-parent decoding assumption.
Finally, we apply \method{} to a real-world climate science task and show that it recovers spatial aggregations related to known climate phenomena, such as the El Niño Southern Oscillation (\cref{sec:exp-climate}).

An implementation of \method{} is available at \href{https://github.com/kurowasan/cdsd}{https://github.com/kurowasan/cdsd}.
For Varimax-PCMCI, we follow the implementation of \citet{tibau2022spatiotemporal}, where dimensionality reduction is done by combining PCA with a Varimax rotation~\citep{kaiser1958varimax}, the causal graph is learned with PCMCI+~\citep{runge2020discovering}, and conditional independence is tested using a partial correlation test when latent dynamics are linear or the CMI-knn test~\citep{runge2018conditional} otherwise. Note that while PCMCI+ supports instantaneous connections, we always restrict the minimum time lag considered to 1. For further implementation details on both methods, refer to \cref{app:implementation}.

\subsection{Synthetic Data Benchmark}\label{sec:exp-simulated}
The first task is to compare \method{} and Varimax-PCMCI.
The key modeling components to evaluate in the two compared methods are: linear versus nonlinear dynamics in the learned causal graphs over latents, and linear versus nonlinear decoding functions from latents to observations. \method{} can flexibly handle all these settings, whereas Varimax-PCMCI assumes linear maps from latents to observations. We evaluate the methods in the following cases: 1) linear dynamics and linear decoding, 2) nonlinear dynamics and linear decoding, and 3) linear dynamics and nonlinear decoding. We expect to find that CDSD shows clear advantages when the mappings from latents to observations are nonlinear. We also compare to other causal representation methods to show the identifiability gain induced by using the constraints on $W$ for data respecting the single-parent decoding assumption.

\textbf{Datasets.} We consider datasets randomly generated according to the model described at \cref{sec:gen_mod}. The generative process is described in detail in \cref{app:synthetic_datasets}.
Unless otherwise specified, we consider $T=5000$ timesteps, a stationary process of order $\tau = 1$, $d_x=100$ observed variables, $d_z=10$ latent variables, and random latent dynamic graphs, akin to Erd\H{o}s-R\'enyi graphs, with a probability $p = 0.15$ of including an edge. 
The nature of the relationships among latents -- and from latents to observables -- is either linear or nonlinear, depending on the specific experiment.

\textbf{Protocol.} 
We assess variability in each experimental condition by repeating each experiment $100$ times with different randomly generated datasets.
The hyperparameters of both methods are chosen to maximize overall performance on $10$ randomly generated datasets distinct from the evaluation (see \cref{app:hp_search}).
Note that, for both methods, $d_z$ and $\tau$ are not part of the hyperparameter search and are set to the ground-truth values in the generative process.

\paragraph{Metrics.} Performance is assessed using two metrics: i) mean correlation coefficient (MCC), which measures the quality of the learned latent representation, and ii) structural Hamming distance (SHD), which measures the number of incorrect edges in the learned causal graph.
MCC corresponds to the highest correlation coefficient between the estimated latents ($\hat{\bz}$) and the ground-truth latent ($\bz$) across all possible permutations (as described in \citep{khemakhem2020ice}).
The use of permutations is necessary since identification can only be guaranteed up to a permutation (see \cref{sec:identifiability}).

\textbf{1) Linear latent dynamics, Linear decoding.} We start by evaluating the methods in a context where all causal relationships are linear. We consider a variety of conditions: $d_z = \{5, 10, 20\}$, $\tau = \{1, 2, 3\}$, $T = \{500, 1000, 5000\}$, and $p=\{0.15, 0.3\}$ (which corresponds to sparse and dense graphs). 
We observed that both methods achieve a high MCC $\geq 0.95$, in all conditions, which is not surprising since they are both capable of identifying the latents when the decoding function is linear (see \cref{app:add_results_synthetic}). The average SHD and its standard error are reported at \cref{fig:shd_main}a. Varimax-PCMCI performs slightly better than \method{} in most conditions, except for more challenging cases such as stationary processes of greater order ($\tau = 3$) and denser graphs ($p = 0.3$). The latter result is in line with previous studies, which observed that continuous optimization methods tend to outperform their constraint-based counterparts \citep{zheng2018dags, brouillard2020differentiable} in dense graphs.

\textbf{2) Nonlinear latent dynamics, Linear decoding.} We now consider the case where causal relationships between the latents are nonlinear, while those from latents to observables remain linear. The results are reported at \cref{fig:shd_main}b.
In contrast with the linear case, we do not present the results under all the experimental conditions due to the prohibitive running time of Varimax-PCMCI, which was greater than 24 hours for a single experiment (for the complete results, see \cref{app:add_results_synthetic}). This can be explained by its reliance on nonlinear conditional independence tests whose running time scales unfavorably w.r.t. the number of samples and variables~\citep{runge2018conditional, zhang2018large, strobl2019approximate}.
Consequently, results for Varimax-PCMCI are only reported up to $1000$ samples. 
In sharp contrast, \method{} completed all experiments in a timely manner. 
Hence, while its solutions tend to have slightly higher SHD, \method{} can be used in contexts where Varimax-PCMCI, at least with a non-parametric conditional independence test, cannot.

\begin{figure}
    \centering
    \subfigure[]{\includegraphics[scale=0.4, clip, trim=0 0 180 0]{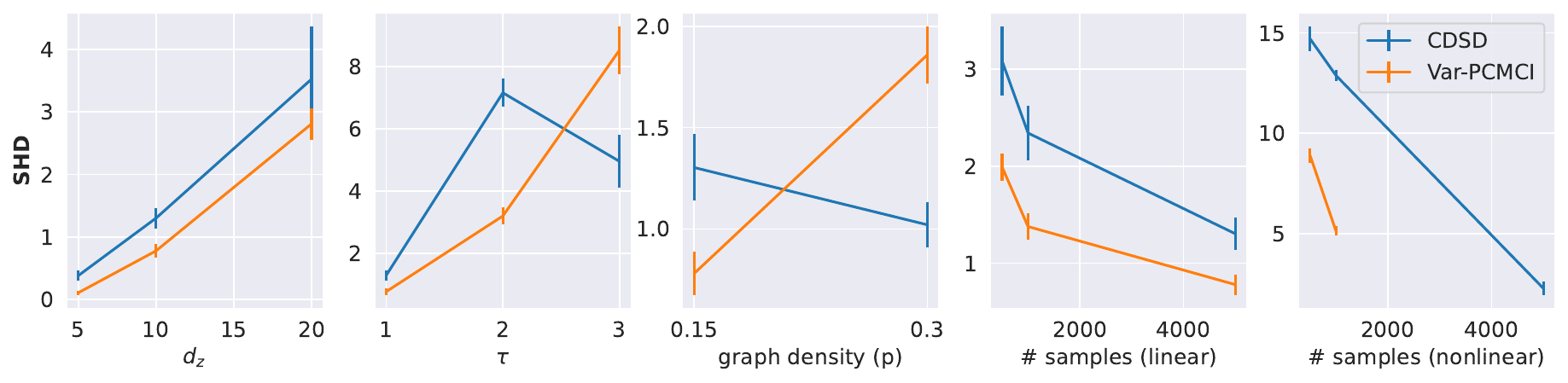}}
    \subfigure[]{\includegraphics[scale=0.4, clip, trim=710 0 0 0]{figures/shd_main.pdf}}
    \vspace{-0.4cm}
    \caption{Comparison of Varimax-PCMCI and \method{} in terms SHD (lower is better) on simulated datasets with linear decoding and both a) linear and b) nonlinear latent dynamics.}
    \label{fig:shd_main}
\end{figure}

\textbf{3) Linear latent dynamics, Nonlinear decoding.}
The purpose of this experiment is to showcase the inability of Varimax-PCMCI to identify the latent representation when the relationships between the latents and the observables are nonlinear. This is the case, since PCA with a Varimax rotation is a linear dimension-reduction method. In contrast, \method{} should have no problem identifying the latents in this setting.
We consider a dataset generated with the previously stated default conditions, where we ensure that the identifiability conditions of \cref{sec:identifiability} are satisfied.
The results are reported at \cref{fig:nonlinear_decoding_result}.
As expected, Varimax-PCMCI fails to recover the latent representation, achieving a poor MCC and, consequently, a poor SHD.
In contrast, \method{} performs much better according to both metrics.
These results clearly show the superiority of \method{} over Varimax-PCMCI when the relationships between latents and observables are nonlinear because of the linearity assumption in the Varimax step.
\textbf{Comparison to causal representation methods.}
We compare \method{} to two causal representation methods, iVAE~\citep{khemakhem2020ice} and DMS~\citep{lachapelle2022disentanglement}, on the synthetics data sets with linear decoding and nonlinear dynamics in Figure~\ref{fig:ablation_constraint}. 
For a fair comparison, we implement iVAE and DMS by modifying the objective for method{} in each case.
For iVAE, this corresponds to not applying the constraints on $W$, not using regularisation on the graph $G$ (i.e., $\lambda_s = 0$) and fitting the variance of $\bz^t | \bz^{t-1}$. For DMS, it simply corresponds to not applying the constraints on $W$. Both methods have a worse MCC than CDSD. This is in line with our theoretical result since only CDSD leverages the single-parent decoding assumption. Note however that several assumptions required by iVAE and DMS may not hold in our datasets. For example, the identifiability result of iVAE assumes that the variance of $\bz^t | \bz^{t-1}$  varies sufficiently, which is not the case in our synthetic data. For DMS, while we use sparse transition graphs ($G$), we did not verify if the graphical criterion required by \citet{lachapelle2022disentanglement} is respected (Theorem 5, assumption 5), nor whether its assumptions of sufficient variability hold. In \cref{app:pca_varimax_validation}, we conduct a similar ablation for Varimax-PCMCI, which shows the necessity of the Varimax rotation in order for PCA to recover a good latent representation for the case of linear latent dynamics.

\begin{figure}
\centering
\begin{minipage}{.45\textwidth}
  \centering
  \includegraphics[width=.9\linewidth]{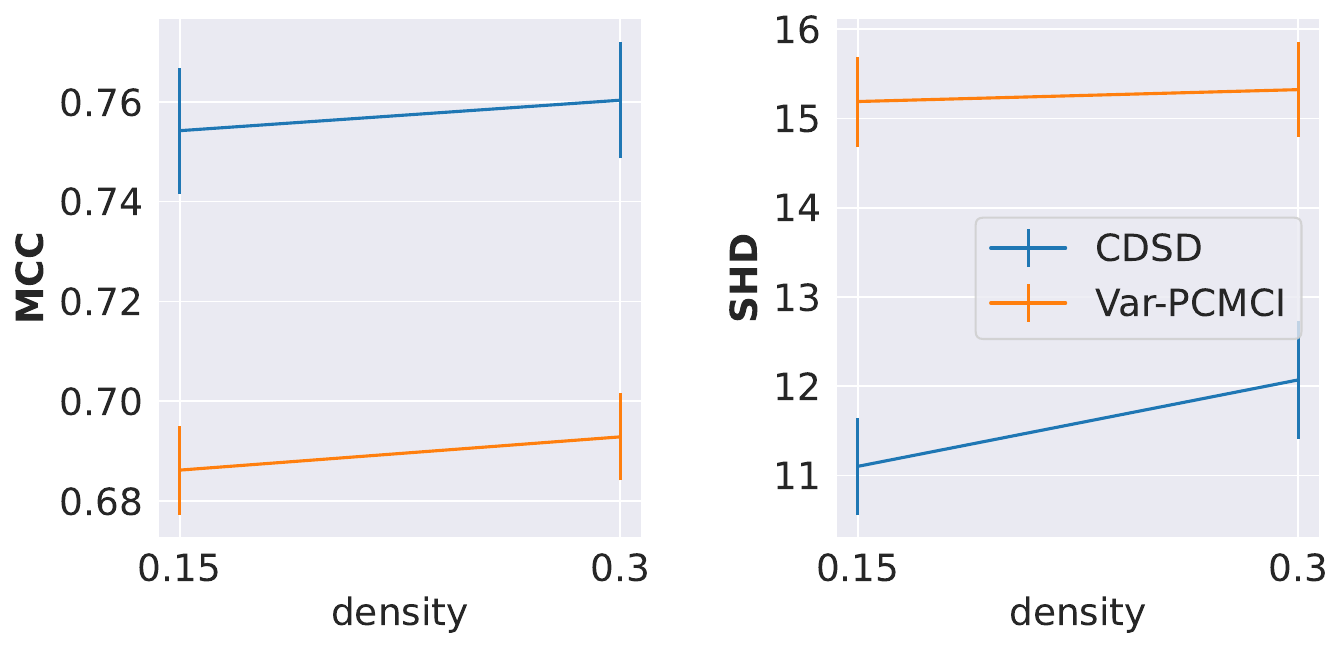}
  \vspace{-0.1cm}
  \captionof{figure}{Comparison of \method{} and Varimax-PCMCI in terms of MCC (higher is better) and SHD (lower is better) on simulated datasets with linear  dynamics and nonlinear decoding.}
  \label{fig:nonlinear_decoding_result}
\end{minipage}%
\hfill
\begin{minipage}{.45\textwidth}
  \vspace{-9pt}
  \centering
  \includegraphics[width=.9\linewidth]{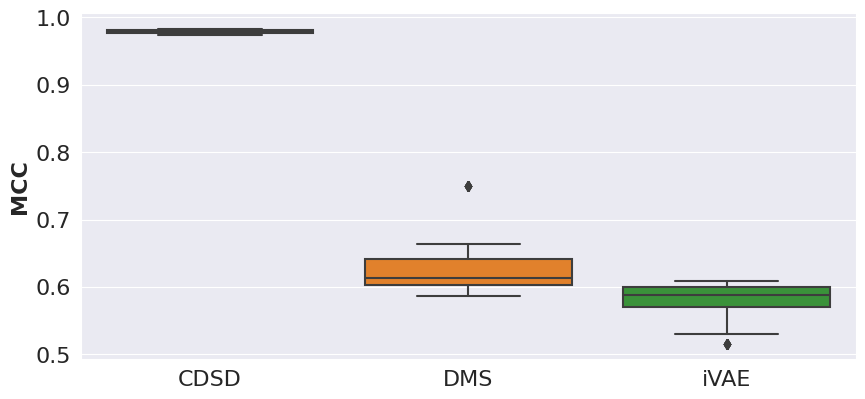}
  \vspace{-0.1cm}
  \captionof{figure}{Comparison of CDSD to DMS and iVAE in terms of MCC.}
  \label{fig:ablation_constraint}
\end{minipage}
\end{figure}

\subsection{Real-world Application to Climate Science}\label{sec:exp-climate}
To test the capabilities of \method{} in real-world settings, we apply it to the National Oceanic and Atmospheric Administration's (NOAA) \emph{Reanalysis 1 mean sea-level pressure (MSLP)} dataset \citep{kalnay1996ncep}. Local variations in MSLP reflect changes in the dynamical state of the atmosphere for a certain region or, in other words, the occurrence and passing of weather systems (e.g. low- or high-pressure systems). Over time, MSLP data can thus be used to identify regions that share common weather properties, and to understand how regional weather systems are coupled to each other globally. Identifying such causal relationships would help climate scientists to better understand Earth's global dynamical weather system and could provide leverage for data-driven forecasting systems.

Here, we use MSLP data from 1948\textendash2022 on a global regular grid with a resolution of $2.5^\circ $ longitude $\times$ $2.5^\circ$ latitude. We aggregated the daily time-series to weekly data and regridded it onto an icosahedral-hexagonal grid (see \cref{app:realworld_datasets}) \citep{majewski2002operational}. The resulting dimensions are $3900 \times 6250$, covering 52 weeks of 75 years ($T = 3900$) and $d_x = 6250$ grid cells. We apply \method{} in order to cluster regions of similar weather properties, and identify which regions are causally linked to weather phenomena in other regions. We use the method with linear dynamics and linear decoding, and, similarly to \citep{runge2015identifying}, we use $d_z = 50$ and $\tau = 5$. 

\cref{fig:realworld}a shows the learned spatial aggregations and the causal graph $G$ obtained with \method{}. The learned aggregations match well with the coarser climatological regions used in the latest climate change assessment reports of the Intergovernmental Panel on Climate Change (IPCC), which were manually defined (compare to Figure~1 in \cite{iturbide2020update}). The learned clusters broadly reflect a superposition of the effects of transport timescales, ocean-to-land boundaries, and the zonal and meridional patterns of the tropospheric circulation (Hadley, Ferrel, and polar cells) in both hemispheres. Among the visually most prominent features is the identification of East Pacific, Central Pacific, and Western Pacific clusters (clusters 13, 39, 48) along the tropical Pacific. These zones are well-known to be coupled through ENSO, but the East Pacific typically sees the most pronounced temperature oscillations due to its shallow oceanic thermocline \citep[e.g.,][]{Nowack2017}. We also recover a relatively zonal structure of clusters in the Southern Hemisphere mid-latitudes (clusters 17, 2, 41, 14, 20, 40, 15, 18, and 43 from west to east) where the zonal tropospheric circulation moves relatively freely without significant disturbances from land boundaries. While not strictly enforced, all the learned regions are spatially connected/homogeneous, i.e. not divided into several de-localized parts (see \cref{app:realworld_experiment}). Highly de-localized, globally distributed, components are for example a major issue in interpreting standard principal component analyses of MSLP data \cite{hannachi2007empirical}.
In contrast, the regions learned by CDSD without constraints are not localized and are harder to associate to known regions such as those related to ENSO (see Appendix~\ref{app:realworld_experiment}).

\begin{figure}[t]
    \centering
    \includegraphics[width=0.93\textwidth]{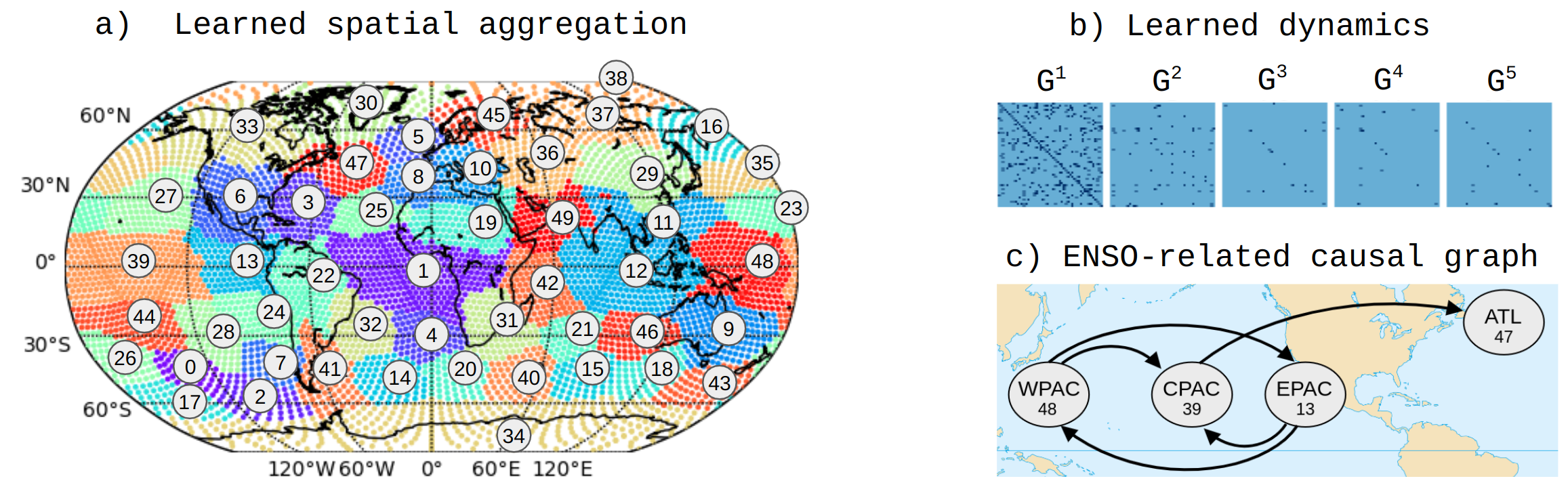}
    \vspace{-0.1cm}
    \caption{\label{fig:realworld} Overview of the climate science results for \method{}. a) Segmentation of the Earth's surface according to $W$. The groups are colored and numbered based on the latent variable to which they are related. b) Adjacency matrices for latent dynamic graphs $G^1, \ldots, G^5$, shown as $d_z \times d_z$ heatmaps. c) Subgraph of $G^1$ showing the learned causal relationships between known ENSO-related regions.}
\end{figure}

While a detailed analysis of the learned causal graphs (\cref{fig:realworld}b) is beyond the scope of this study, it is intuitive that the strongest and most frequent connections are found within a timescale of one week ($G^1$), but notably longer - likely more distant connections - are found, too. These likely reflect the well-known presence of long-distance teleconnections between world regions \cite{nowack2020causal}. In \cref{fig:realworld}c we show one example of the causal coupling inferred for ENSO-related modes (clusters 13, 39, 47, 48) which is similar to the causal graph found in \citet{runge2019detecting}.

\section{Discussion}
We present \method{}, a method that relies on the single-parent decoding assumption to learn a causal representation and its connectivity from time series data. The method is accompanied by theoretical results that guarantee the identifiability of the representation up to benign transformations. The key benefits of \method{} over Varimax-PCMCI are that i) it supports nonlinear decoding functions, and ii) as opposed to its constraint-based tests, it scales well in the number of samples and variables in the nonlinear dynamics case. Furthermore, as illustrated in the application to climate science, \method{} and its assumptions, appear to be applicable in practice and seem particularly well-suited for problems of scientific interest, such as the spatial clustering of weather measurements.

We highlight a few limitations that should be considered. Several assumptions, such as the stationarity of the dynamical system or the single-parent decoding assumption, can be partially or totally violated in real-world applications. We did not study the impact of these model misspecifications on the performance of \method{}. In all our experiments, we assumed that $d_z$ and $\tau$ were known. However, in practical applications, these values are unknown and might be difficult to infer, even for experts. 

Besides these limitations, \method{}, in its general form, can be used in several contexts or be readily extended. It can be used with multivariate data (e.g., in climate science applications, one could be interested in modeling sea-level pressure, but also temperature, precipitation, etc.). Furthermore, in other contexts, such as brain imaging studies, one could be interested in learning different graphs $G$ for different subjects, while sharing a common spatial aggregation (as in \citet{monti2018unified}). We want to highlight that the method can be further extended to include instantaneous connections, and learn from observational and interventional data. In \cref{app:several_features}, we show how our method can be adapted to support all these cases.  Overall, we believe that \method{} is a significant step in towards the goal of bridging the gap between causal representation learning and scientific applications.

\bibliography{ref_local}

\appendix
\onecolumn

\section{Derivation of the ELBO} \label{app:elbo_derivation}
We show explicitly the derivation of the ELBO by starting from the marginal likelihood and by using the model we have proposed in Section~\ref{sec:gen_mod}.
\begin{align}
    \log p(\bx^{\leq T}) & = \mathbb{E}_{\bz^{\leq T} \sim q(\bz^{\leq T} \mid \bx^{\leq T})}\left[\log p(\bx^{\leq T}) \frac{p(\bx^{\leq T}, \bz^{\leq T})}{p(\bx^{\leq T}, \bz^{\leq T})} \frac{q(\bz^{\leq T} \mid \bx^{\leq T})}{q(\bz^{\leq T} \mid \bx^{\leq T})} \right] \\
    & = \mathbb{E}_{\bz^{\leq T} \sim q(\bz^{\leq T} \mid \bx^{\leq T})}\left[\log \frac{q(\bz^{\leq T} \mid \bx^{\leq T})}{p(\bz^{\leq T} \mid \bx^{\leq T})} + \log \frac{p(\bx^{\leq T}, \bz^{\leq T})}{q(\bz^{\leq T} \mid \bx^{\leq T})}\right] \\
    & = \text{KL}(q(\bz^{\leq T} \mid \bx^{\leq T}) || p(\bz^{\leq T} \mid \bx^{\leq T})) + \mathbb{E}_{\bz^{\leq T} \sim q(\bz^{\leq T} \mid \bx^{\leq T})}\left[ \log \frac{p(\bx^{\leq T}, \bz^{\leq T})}{q(\bz^{\leq T} \mid \bx^{\leq T})} \right] .
\end{align}
Since KL $\geq 0$, we have:  
\begin{align}
    \log p(\bx^{\leq T}) & \geq \mathbb{E}_{\bz^{\leq T} \sim q(\bz^{\leq T} \mid \bx^{\leq T})}\left[ \log \frac{p(\bx^{\leq T}, \bz^{\leq T})}{q(\bz^{\leq T} \mid \bx^{\leq T})} \right] .
\end{align}

Now we will replace $p(\bx^{\leq T}, \bz^{\leq T})$ and $q(\bz^{\leq T} \mid \bx^{\leq T})$ by the factorisation we assumed in Equation~\ref{eq:facto_density} and~\ref{eq:conditional_posterior}:
\begin{align}
    \log p(\bx^{\leq T}) & \geq \mathbb{E}_{\bz^{\leq T} \sim q(\bz^{\leq T} \mid \bx^{\leq T})}\left[ \log \frac{\prod_{t=1}^T p(\bz^t \mid \bz^{< t})p(\bx^t \mid \bz^t)}{\prod_{t=1}^T q(\bz^{t} \mid \bx^{t})} \right] \\
    & \geq \mathbb{E}_{\bz^{\leq T} \sim q(\bz^{\leq T} \mid \bx^{\leq T})}\left[\sum_{t=1}^T \log p(\bx^t \mid \bz^t) \right] + \mathbb{E}_{\bz^{\leq T} \sim q(\bz^{\leq T} \mid \bx^{\leq T})} \left[\sum_{t=1}^T \log \frac{ p(\bz^t \mid \bz^{< t})}{q(\bz^{t} \mid \bx^{t})} \right] .
\end{align}

Finally, thanks to the decomposition of our proposed posterior (Equation~\ref{eq:conditional_posterior}), we have:
\begin{align}
    \log p(\bx^{\leq T}) \geq \sum_{t=1}^T \Bigl[ \mathbb{E}_{\bz^t \sim q(\bz^t \mid \bx^t)}\left[\log p(\bx^t \mid \bz^t)\right] - \\
    \mathbb{E}_{\bz^{<t} \sim q(\bz^{<t} \mid \bx^{<t})} \text{KL}\left[q(\bz^t \mid \bx^t) \,||\, p(\bz^t \mid \bz^{< t})\right] \Bigr].
\end{align}

\section{Identifiability}\label{app:identifiability}

In what follows, we overload the notation by defining $\vf(\vz^{\leq T}) := [\vf(\vz^1) \dots \vf(\vz^T)]$ and similarly for other functions.

\begin{lemma}[Denoising $\bx$]\label{lem:denoise}
Assume we have two models $p(\bx^{\leq T}, \bz^{\leq T})$ and $\hat p(\bx^{\leq T}, \hat \bz^{\leq T})$ as specified in \cref{sec:gen_mod} with parameters $(\vg, \vf, G, \sigma^2)$ and $(\hat\vg, \hat\vf, \hat G, \hat\sigma^2)$, respectively. Assume $d_z < d_x$. Therefore, whenever $\int p(\bx^{\leq T}, \bz^{\leq T})d\bz^{\leq T} = \int \hat p(\bx^{\leq T}, \hat \bz^{\leq T})d\hat \bz^{\leq T}$ for all $\bx^{\leq T}$, we have that the distributions of $\by^{\leq T} := \vf(\bz^{\leq T})$ and $\hat \by^{\leq T} := \hat\vf ( \hat \bz^{\leq T})$ are equal.
\end{lemma}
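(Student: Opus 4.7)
Define $\by^t := \vf(\bz^t)$ and $\hat\by^t := \hat\vf(\hat\bz^t)$. The observation model in \cref{eq:conditional_observation} decomposes $\bx^{\leq T} = \by^{\leq T} + \bm{\epsilon}^{\leq T}$, where $\bm{\epsilon}^{\leq T} \sim \mathcal N(0, \Sigma_N)$ with $\Sigma_N := I_T \otimes \mathrm{diag}(\sigma^2)$ is jointly independent of $\by^{\leq T}$ (the $\bm{\epsilon}^t$ are i.i.d.\ Gaussian across $t$ and independent of $\bz^{\leq T}$); the same decomposition holds for the $\hat{}$-model. The plan is to work with characteristic functions, deconvolve off the Gaussian noise, and then invoke Fourier uniqueness.

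First I would take Fourier transforms and use independence to turn the assumed equality $p(\bx^{\leq T}) = \hat p(\bx^{\leq T})$ into the functional equation
$$\phi_{\by^{\leq T}}(\bm{\omega})\, e^{-\frac{1}{2}\bm{\omega}^T \Sigma_N \bm{\omega}} \;=\; \phi_{\hat\by^{\leq T}}(\bm{\omega})\, e^{-\frac{1}{2}\bm{\omega}^T \hat\Sigma_N \bm{\omega}}, \qquad \bm{\omega} \in \sR^{d_x T}.$$
The strategy is then: (i) show $\Sigma_N = \hat\Sigma_N$; (ii) cancel the Gaussian factors to conclude $\phi_{\by^{\leq T}} = \phi_{\hat\by^{\leq T}}$; and (iii) apply Fourier uniqueness to conclude $\by^{\leq T} \stackrel{d}{=} \hat\by^{\leq T}$.

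\textbf{Main obstacle.} Step (i) is the technical heart, and is where the hypothesis $d_z < d_x$ is essential. Each $\by^t$ takes values in $\vf(\sR^{d_z}) \subseteq \sR^{d_x}$, a set of dimension at most $d_z$ (assuming $\vf$ smooth, as in the setting of \Cref{prop:IdentV}), so $\by^{\leq T}$ is supported on a set of dimension at most $d_z T < d_x T$, and similarly for $\hat\by^{\leq T}$. Rearranging the functional equation yields $\phi_{\hat\by^{\leq T}}(\bm{\omega}) = \phi_{\by^{\leq T}}(\bm{\omega}) \exp\bigl(\tfrac{1}{2}\bm{\omega}^T(\Sigma_N - \hat\Sigma_N)\bm{\omega}\bigr)$, and combining this with $|\phi_{\by^{\leq T}}| \leq 1$ forces $|\phi_{\hat\by^{\leq T}}|$ to decay at least Gaussianly along any direction in which $\Sigma_N - \hat\Sigma_N$ is positive. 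The hard part will be turning this pointwise decay into a contradiction: Gaussian decay across a linear subspace of dimension $k$ would force a projection of $\hat\by^{\leq T}$ to be absolutely continuous (in fact analytic) as a random vector in $\sR^k$, which is impossible once $k$ exceeds the support's dimension. Exploiting the product structure $\Sigma_N - \hat\Sigma_N = I_T \otimes \mathrm{diag}(\sigma^2 - \hat\sigma^2)$, in which positive eigendirections come in $T$-fold copies (one per time step), should amplify these bounds enough to force every coordinate of $\sigma^2 - \hat\sigma^2$ to vanish. A cleaner alternative would be to invoke a classical uniqueness theorem for Gaussian deconvolution against measures singular with respect to Lebesgue; either way, once $\Sigma_N = \hat\Sigma_N$, steps (ii) and (iii) are routine.
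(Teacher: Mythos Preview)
Your plan is essentially the paper's: pass to characteristic functions, show the noise covariances agree, cancel the Gaussian factors, and invoke Fourier uniqueness. The only noteworthy difference is in how step~(i) is executed. The paper (which takes $\sigma^2$ to be a scalar, so $\Sigma_N=\sigma^2 I$) avoids any analysis of decay rates: assuming w.l.o.g.\ $\sigma^2<\hat\sigma^2$, one rearranges to
\[
\phi_{\by^{\leq T}}(\bm\omega)=\phi_{\hat\by^{\leq T}}(\bm\omega)\,e^{-\frac{1}{2}(\hat\sigma^2-\sigma^2)\|\bm\omega\|^2},
\]
and simply \emph{recognizes} the right-hand side as the characteristic function of $\hat\by^{\leq T}$ convolved with a nondegenerate Gaussian, hence a distribution with support all of $\sR^{d_xT}$; the left-hand side, however, corresponds to a distribution supported on the $d_zT$-dimensional set $\vf(\sR^{d_z})^T$, a direct contradiction since $d_z<d_x$. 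This support-comparison argument is shorter and sidesteps the absolute-continuity route you sketch. (Minor: your rearranged identity has a sign slip---the exponent should be $-\tfrac12\bm\omega^\top(\Sigma_N-\hat\Sigma_N)\bm\omega$, which is in fact what your subsequent ``Gaussian decay'' sentence presumes.)
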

\begin{proof}
    Let $\mathbb{P}_{\bx^{\leq T}}$ and ${\mathbb{P}}_{\hat \bx^{\leq T}}$ be the probability measures respectively induced by the densities $p(\bx^{\leq T}) := \int p(\bx^{\leq T}, \bz^{\leq T})d\bz^{\leq T}$ and $\hat p(\hat \bx^{\leq T}) := \int \hat p(\hat \bx^{\leq T}, \hat \bz^{\leq T})d\hat \bz^{\leq T}$. We can thus write
\begin{align}
    \mathbb{P}_{\bx^{\leq T}} = \mathbb{P}_{\hat \bx^{\leq T}} \,.
\end{align}
Let us define $\by^t := \vf(\bz^{t})$ and $\hat \by^t := \hat \vf(\hat z^{t})$ for all $t$ where ${\bz^{\leq T} \sim p(\bz^{\leq T})}$ and ${\hat \bz^{\leq T} \sim \hat p(\hat \bz^{\leq T})}$. Let $\mathbb{P}_{\by^{\leq T}}$ and ${\mathbb{P}}_{\hat \by^{\leq T}}$ be the probability distributions of $\by^{\leq T}$ and $\hat \by^{\leq T}$, respectively. Notice how we can write $\bx^{\leq T} = \by^{\leq T} + \bn^{\leq T}$ and $\hat \bx^{\leq T} = \by^{\leq T} + \hat \bn^{\leq T}$, where $\bn^t \sim \mathcal{N}(0, \sigma^2 I_{d_x})$ and $\hat \bn^t \sim \mathcal{N}(0, \hat \sigma^2 I_{d_x})$ for all $t \leq T$, where the noises are mutually independent across time. This means we can write
\begin{align}
    \mathbb{P}_{\by^{\leq T}} * \mathbb{P}_{\bn^{\leq T}} = \mathbb{P}_{\hat \by^{\leq T}} * \mathbb{P}_{\hat \bn^{\leq T}} \,,
\end{align}
where $\mathbb{P}_{\bn^{\leq T}}$ stands for the probability measure of the Gaussian noise (similarly for $\mathbb{P}_{\hat \bn^{\leq T}}$) and $*$ stands for the convolution operator on measures.

The following step makes use of the Fourier transform $\mathcal{F}$ generalized to arbitrary probability measures. See~\citet[Chapter 8]{pollard_2001}. Note that the Fourier transform of a probability distribution is exactly the \textit{characteristic function} of the random variable it represents.
\begin{align}
    \mathcal{F}(\mathbb{P}_{\by^{\leq T}} * \mathbb{P}_{\bn^{\leq T}}) &= \mathcal{F}(\mathbb{P}_{\hat \by^{\leq T}} * \mathbb{P}_{\hat \bn^{\leq T}}) \\
    \mathcal{F}(\mathbb{P}_{\by^{\leq T}}) \mathcal{F}(\mathbb{P}_{\bn^{\leq T}}) &= \mathcal{F}(\mathbb{P}_{\hat \by^{\leq T}})\mathcal{F}(\mathbb{P}_{\hat \bn^{\leq T}}) \\
    \mathcal{F}(\mathbb{P}_{\by^{\leq T}})(\omega) e^{-\frac{\sigma^2}{2}\omega^\top \omega} &= \mathcal{F}(\mathbb{P}_{\hat \by^{\leq T}})(\omega)e^{-\frac{\hat\sigma^2}{2}\omega^\top \omega}\,, \forall \omega \in \mathbb{R}^{T \cdot d_x} \,,
\end{align}
where we used the fact that (i) the Fourier transform of a convolution is the product of the Fourier transforms and (ii) the fact that the Fourier transform of a Gaussian random vector with mean 0 and covariance $\sigma^2 I$ is $e^{-\frac{\sigma^2}{2}\omega^\top \omega}$. 

Now our goal is to show that $\sigma^2 = \hat\sigma^2$. Assume this is false, i.e. $\sigma^2 < \hat\sigma^2$ (w.l.o.g.). Because this Fourier transform is positive for all $\omega \in \mathbb{R}^{T \cdot d_x}$, we can divide by its value on both sides and obtain
\begin{align}
    \mathcal{F}(\mathbb{P}_{\by^{\leq T}})(\omega) &= \mathcal{F}(\mathbb{P}_{\hat \by^{\leq T}})(\omega)e^{-\frac{\hat\sigma^2 - \sigma^2}{2}\omega^\top \omega}\,, \forall \omega \in \mathbb{R}^{T \cdot d_x} \,,
\end{align}
where we recognize that $e^{-\frac{\hat\sigma^2 - \sigma^2}{2}\omega^\top \omega}$ is the Fourier transform of a Gaussian distribution with mean zero and covariance $(\hat\sigma^2 -\sigma^2)I_{T \cdot d_x}$. Now notice how the l.h.s. is the Fourier transform of a distribution with support contained in $\vf(\sR^{Td_z})$, which is a $d_z$-dimensional manifold embedded in $\sR^{Td_x}$. Since we assume $d_z < d_x$, the set $\vf(\sR^{Td_z})$ is a proper subset of $\sR^{Td_x}$ (i.e. $\vf(\sR^{Td_z}) \not= \sR^{Td_x}$). In contrast, the r.h.s. is the Fourier transform of a distribution with full support, i.e. $\mathbb{R}^{Td_x}$, since it is the convolution of $\mathbb{P}_{\hat y^{\leq T}}$ and a Gaussian distribution. This is a contradiction since both supports should be equal. Hence we must have that $\sigma^2 = \hat\sigma^2$. We can thus write
\begin{align}
    \mathcal{F}(\mathbb{P}_{\by^{\leq T}}) &= \mathcal{F}(\mathbb{P}_{\hat \by^{\leq T}}) \\
    \mathbb{P}_{\by^{\leq T}} &= \mathbb{P}_{\hat \by^{\leq T}} \, ,
\end{align}
which concludes the proof.
\end{proof}

\IdentV*

\begin{proof}
By Lemma~\ref{lem:denoise}, we have that 
\begin{align}
    \mathbb{P}_{\by^{\leq T}} = \mathbb{P}_{\hat \by^{\leq T}} \, . \label{eq:denoised}
\end{align}
By marginalizing out $\by^{2:T}$ on both sides, we get
\begin{align}
    \mathbb{P}_{\by^{1}} &= \mathbb{P}_{\hat \by^{1}} \,,
\end{align}
which of course implies that their supports are equal:
\begin{align}
    \text{supp}(\mathbb{P}_{\by^{1}}) &= \text{supp}(\mathbb{P}_{\hat \by^{1}}) \\
    \vf(\text{supp}(p(\vz^t))) &= \hat \vf(\text{supp}(\hat p(\hat\vz^t)))\\
    \vf(\sR^{d_z}) &= \hat \vf(\sR^{d_z})\,, \label{eq:5643293420}
\end{align}
where $\text{supp}(\cdot)$ stands for support of a distribution. Note that the last step is because $p(z^1)$ is assumed to have full support (Section~\Cref{sec:gen_mod}).

Equation~\eqref{eq:5643293420} and the fact that both $\vf$ and $\hat\vf$ are diffeomorphisms onto their image implies that the map $\vv := \vf^{-1} \circ \hat\vf$ is both well-defined and a diffeomorphism.

From \eqref{eq:denoised}, we get
\begin{align}
    \sP_{\vz^{\leq T}} \circ \vf^{-1} &= \sP_{\hat\vz^{\leq T}} \circ \hat\vf \\
    \sP_{\vz^{\leq T}} \circ \vf^{-1} \circ \hat\vf &= \sP_{\hat\vz^{\leq T}}\\
    \sP_{\vz^{\leq T}} \circ \vv &= \sP_{\hat\vz^{\leq T}} \, .
\end{align}
The above equation combined with the change-of-variable formula yields
\begin{align}
    p(\vv(\hat\vz^{\leq T})) \prod_{t=1}^T|\det D\vv(\hat\vz^t)| = \hat p(\hat\vz^{\leq T}),\ \forall \hat\vz^{\leq T} \in \sR^{d_z \times T} \, ,
\end{align}
which concludes the proof.
\end{proof}

\IdentVPerm*

\begin{proof}
    We see that $\vv = \vf^{-1} \circ \hat\vf$ implies
    \begin{equation}
    \label{eq:f_f_hat}
        \hat\vf = \vf \circ \vv\, .
    \end{equation}

    Taking the derivative on both sides of \Cref{eq:f_f_hat}, write
    \begin{align}
    \label{eq:jacobian_eq}
    \begin{split}
        D\hat\vf(\bz) &= D(\vf \circ \vv)(\bz) = D\vf(\vv(\bz)) D\vv(\bz), %
    \end{split}
    \end{align}
    where the second equality follows from applying the chain rule, each Jacobian $D\hat\vf(\bz)$ and $D\vf(\vv(\bz))$ is a $d_x \times d_z$ matrix and the Jacobian $D\vv(\bz)$ is a $d_z \times d_z$ matrix.

    In words, \Cref{eq:f_f_hat} tells us that the mapping $\hat\vf$ is ``imitating'' the mapping $\vf$ in the following sense: Evaluating $\hat\vf$ is the same as first evaluating $\vv$ and then evaluating $\vf$. %
    
    We need to show that $\vv(\bz)$ is a permutation-scaling transformation in the sense defined in the statement of this proposition. To achieve this, we will show that the Jacobian $D\vv(\vz)$ is a permutation-scaling matrix for all $\vz$.

    We show this in a number of steps:

    \paragraph{Step 1.} Since $\hat\vf$ is a diffeomorphism, its Jacobian, $D\hat\vf$, has full column rank everywhere. Thus, its Moore-Penrose inverse (also known as its pseudo-inverse), $D\hat\vf(\bz)^{+}$, can be written as,
    \begin{equation}
        D\hat\vf(\bz)^{+} = (D\hat\vf(\bz)^\top D\hat\vf(\bz))^{-1} D\vf(\bz)^\top.
    \end{equation}
    Further, $D\hat\vf(\bz)^{+}$ is a left inverse; that is, $D\hat\vf(\bz)^{+} D\hat\vf(\bz) = I$.

    We can left-multiply both sides of \Cref{eq:jacobian_eq} by $D\hat\vf(\bz)^{+}$, yielding,
    \begin{equation}
        \label{eq:inverse_v}
        I = (D\hat\vf(\bz)^\top D\hat\vf(\bz))^{-1} D\hat\vf(\bz)^\top D{\vf}(\vv(\bz)) D\vv(\bz).
    \end{equation}

    \paragraph{Step 2.} We now show that the matrix $D\hat\vf(\bz)^\top D\hat\vf(\bz)$ is diagonal. To see this, consider $k \not= k'$ and write
    \begin{equation}
        (D\hat\vf(\bz)^\top D\hat\vf(\bz))_{k,k'} = \sum_{d=1}^{d_x} D\hat\vf(\bz)_{d, k} D\hat\vf(\bz)_{d,k'}.
    \end{equation}
    This must be zero since, otherwise, it would imply that there exists a $d$ such that both $D\hat\vf(\vz)_{d,k}$ and $D\hat\vf(\vz)_{d,k'}$ are different from zero, but this is impossible since $y_d$ has only one parent in the graph $F$ (by the ``single-parent property'').

    Define $\Lambda(\bz) := D\hat\vf(\bz)^\top D\hat\vf(\bz)$, which we just showed is diagonal. \Cref{eq:inverse_v} implies that 
    \begin{equation}
    \label{eq:v_inv}
        D\vv(\bz)^{-1} = \Lambda(\bz)^{-1} D\hat\vf(\bz)^\top D{\vf}(\vv(\bz)),
    \end{equation}
    and since $D\vv(\bz)^{-1}$ is invertible, $D\hat\vf(\bz)^\top D{\vf}(\vv(\bz))$ must also be invertible.

    \paragraph{Key idea.} To show that $D\vv(\bz)$ is a permutation-scaling matrix, we will show that its inverse is a permutation-scaling matrix. We have already shown that $\Lambda(\bz)$ is a diagonal matrix. Thus, what remains is to show that $D\hat\vf(\bz)^\top D{\vf}(\vv(\bz))$ is a permutation-scaling matrix. 

    \paragraph{Step 3.} For any $d_z \times d_z$ invertible matrix $L$,
    \begin{equation} \nonumber
        \text{det}(L) = \sum_{\pi \in \Pi_{d_z}} \text{sign}(\pi) \prod_{k=1}^{d_z} L_{\pi(k),k} \neq 0,
    \end{equation}
    where $\Pi_{d_z}$ is the set of $(d_z)$-permutations. This implies that there exists a permutation $\pi \in \Pi_{d_z}$ so that for all $k \leq d_z$, $L_{\pi(k), k} \neq 0$. 

    Applying this result to the invertible matrix $D\hat\vf(\bz)^\top D{\vf}(\vv(\bz))$,
    \begin{align}
    \label{eq:existence_d1}
    \begin{split}
        &\exists \pi \in \Pi_{d_z}: \forall k \leq d_z, (D\hat\vf(\bz)^\top D{\vf}(\vv(\bz)))_{\pi(k), k} \neq 0 \\
        &\implies \forall k \leq d_z, \sum_{d=1}^{d_x} D\hat\vf(\bz)_{d, \pi(k)} {D{\vf}(\vv(\bz))}_{d,k} \neq 0\\
        &\implies \forall k \leq d_z,\ \exists d_1: D\hat\vf(\bz)_{d_1. \pi(k)} \neq 0 \neq {D{\vf}(\vv(\bz))}_{d_1,k}.
    \end{split}
    \end{align}

    \paragraph{Key idea.} To show that $D\hat\vf(\bz)^\top D{\vf}(\vv(\bz))$ is a permutation-scaling matrix, we want to show that this matrix has nonzero values \emph{only} at entries of the form $(\pi(k),k)$. We will prove this by contradiction, using the existence of the observed feature $y_{d_1}$ from \Cref{eq:existence_d1}.

    \paragraph{Step 4.} By contradiction, suppose there exists a pair of indices $(k, k')$ such that $k' \neq \pi(k)$ and
    \begin{equation}
        (D\hat\vf(\bz)^\top D{\vf}(v(\bz)))_{k',k} = \sum_{d = 1}^{d_x} D\hat\vf(\bz)^\top_{k', d} D{\vf}(\vv(\bz))_{d, k} \neq 0\, .
    \end{equation}

    This means that there exists a feature $y_{d_2}$ so that
    \begin{equation}
        D\hat\vf(\bz)_{d_2, k'} \neq 0 \neq {D{\vf}(\vv(\bz))}_{d_2, k}\, . \label{eq:6534567542}
    \end{equation}
    We know that $d_2$ is not equal to $d_1$ from \Cref{eq:existence_d1}, since if $d_1$ and $d_2$ were the same index, then $y_{d_1}$ would have two distinct parents if $\hat\vf$, namely $\pi(k)$ and $k'$ .

    By selecting only columns $d_1$ and $d_2$ in \Cref{eq:jacobian_eq}, we obtain the following equality
    \begin{equation}
        D\hat\vf(\bz)_{_{\{d_1, d_2\}}, \cdot} = D{\vf}(\vv(\bz))_{_{\{d_1, d_2\}}, \cdot} D\vv(\bz). \label{eq:submatrix}
    \end{equation}
    By \eqref{eq:existence_d1}~\&~\eqref{eq:6534567542} and the fact that both $\vf$ and $\hat\vf$ satisfy the ``single-parent property'', we have
    \begin{align}
D\hat\vf(\bz)_{_{\{d_1, d_2\}}, \cdot} &=\begin{blockarray}{ccccccccc}
 &  &  &  & \pi(k) & k' &  \\
\begin{block}{c(cccccccc)}
  d_1 & 0 & \cdots & 0 & * & 0 & 0 & \cdots & 0 \\
  d_2 & 0 & \cdots & 0 & 0 & * & 0 &\cdots & 0\\
\end{block}
\end{blockarray} \\
D\vf(\vv(\bz))_{_{\{d_1, d_2\}}, \cdot} &=\begin{blockarray}{cccccccc}
 &  &  &  & k & &  \\
\begin{block}{c(ccccccc)}
  d_1 & 0 & \cdots & 0 & * & 0 & \cdots & 0 \\
  d_2 & 0 & \cdots & 0 & * & 0 & \cdots & 0\\
\end{block}
\end{blockarray} \, ,
\end{align}
where $*$ denotes nonzero entries. From the above, it is clear that $D\hat\vf(\bz)_{_{\{d_1, d_2\}}, \cdot}$ has a rank of 2 and $D\vf(\vv(\bz))_{_{\{d_1, d_2\}}, \cdot}$ has a rank of 1. Since $D\vv(\vz)$ is invertible we have that the r.h.s. of \eqref{eq:submatrix} has rank 1. Equation~\eqref{eq:submatrix} is thus a contradiction since the l.h.s. has rank 2.

    Thus, $(D\hat\vf(\bz)^\top D{\vf}(\vv(\bz)))_{k',k} \not= 0$ if and only if $k' = \pi(k)$, where $\pi$ is a $(d_z)$-permutation. In other words, $D\hat\vf(\bz)^\top D{\vf}(\vv(\bz))$ is a permutation-scaling matrix.

    Going back to \Cref{eq:v_inv}, $D\vv(\bz)^{-1}$ is a permutation-scaling matrix since it is a product of a diagonal matrix and a permutation-scaling matrix. Thus, its inverse, $D\vv(\bz)$ is also a permutation-scaling matrix. 

    The argument above holds \textit{point-wise}, i.e. for each $\vz$. However, \textit{a priori}, it is possible that the permutation $\pi$ changes for different values of $\vz$. It turns out this is impossible since that would violate the fact that $D\vv(\vz)$ is continuous.
\end{proof}

\section{Optimization} \label{sec:optimization}

\subsection{Augmented Lagrangian Method} \label{app:alm}
The main idea of the Augmented Lagrangian Method (ALM) is to remove constraints in an optimization problem and instead add a penalty term to the objective (For more detailed explanations, see \citet{nocedal1999numerical}, Chapter 17). For an optimization problem where we aim to minimize $\mathcal{L}(x)$ subject to a constraint $h(\bx) = 0$, the equivalent ALM objective is:
\begin{equation*}
L(\bx, \lambda_k, \mu_k) = \mathcal{L}(\bx) + \lambda_k h(\bx) + \frac{\mu_k}{2} || h(\bx) ||^2.
\end{equation*}
To approximately solve the constrained problem, a sequence of problems will be solved where $\lambda_k$ and $\mu_k$ will be incrementally increased as $k$ increases. A problem is considered solved when the loss on a held-out dataset does not decrease. Then, a new problem is initialized with the values $\lambda_{k + 1}$ and $\mu_{k + 1}$:
\begin{align*}
\lambda_{k + 1} & \leftarrow \lambda_k + \mu_k \cdot h(W^*_{(k)}) \\
\mu_{k + 1} & \leftarrow \begin{cases} 
\eta \cdot \mu_k, & \text{if } h(W^*_{(k)}) > \delta \cdot h(W^*_{(k - 1)}) \\
\mu_k, & \text{otherwise}
\end{cases}
\end{align*}
where $\mu_k > 0$ and $\eta > 1$ and $W^*_{(k)}$ is the approximate solution to the $k$-th problem.

\subsection{Projected Gradient} \label{app:projected_grad}
The projected gradient descent method can be used when some parameters are under constraints. In general, we proceed in two steps: 1) perform a step of gradient descent and 2) project the result on the feasible set. More formally, we do a normal gradient descent step from $x_k$ :
\begin{equation*}
   y_{k + 1} = x_k - \alpha \nabla f(x_k),
\end{equation*}

and then we project the result $y_{k+1}$ on the feasible set $\mathcal{Q}$:
\begin{equation*}
    x_{k+1} = \arg\min_{x \in \mathcal{Q}} \frac{1}{2} || x - y_{k+1}||^2_2.
\end{equation*}
In our case, the feasible set is $\mathbb{R}_{\geq 0}$, thus the projection is simply:
\begin{equation}
    x_{k+1} = 
    \begin{cases}
        y_{k+1} & \text{if}\, y_{k + 1} \geq 0\\
        0 & \text{otherwise}.
    \end{cases}
\end{equation}

\section{Datasets} \label{app:datasets}

\subsection{Synthetic Datasets} \label{app:synthetic_datasets}
Here, we detail the generative process of the synthetic datasets. The four main steps are:
\begin{enumerate}
    \item Sample transition graphs $G \in \{0, 1\}^{\tau \times d_z \times d_z}$.
    \item Sample mechanisms to generate the latent variables $\bz$ with relations following $G$.
    \item Sample a matrix $W$.
    \item Sample mechanisms of the decoding function to generate the observable $\bx$ from $\bz$.
\end{enumerate}

\textbf{1- Graphs G}. For the adjacency matrices representing the lagged relations, we sampled independently $G^{k}_{ij} \sim Bernoulli(p)$ where $p \in [0, 1]$ is a parameter corresponding to the probability of adding an edge. As a reminder: $G^{k}_{ij} = 1$ if and only if $z^{t-k}_j$ is a parent of $z^{t}_i$. Note that since these adjacency matrices represent links at different timesteps, we don't need to sample DAGs since it is impossible to create cycles.

For the graph $G^{1}$, we systematically set the elements of its diagonal to 1. These edges correspond to the relations of $z^t_i$ to themselves at the previous timestep: $z^{t-1}_i$. This assumption is often observed in real-world phenomena and is commonly used to generate time-series datasets  \citep{runge2019detecting, li2009single}.

\textbf{2- Dynamic's functions}.
We first explain how we generate the mechanisms for the \textit{nonlinear} case and, since it is a particular case, we then explain the \textit{linear} case.

It is not obvious in general how to sample a nonlinear generative process that is stationary or, at least, that won't greatly diverge over time. In order to have non-divergent generative processes with nonlinear functions, we follow \citep{li2009single} (See Theorem 1). Their method relies on two tricks: 1) the process is additive and the functions used have a linear behavior for large values, and 2) the coefficients are chosen so that an equivalent linear process would be stationary.

We use the following structural equation:
\begin{equation*} \label{eq:add_nonlinear_data}
    z^t_i := \sum^{\tau}_{k = 0}\sum^{d_z}_{j = 1} G^{k}_{i j} A^{k}_{ij} s^{k}_{ij}(z^{t-k}_j) + \epsilon_i,
\end{equation*}
where $\epsilon_i \sim \mathcal{N}(0, 1)$ and the coefficients $A_{ij}$ are independently sampled from $\mathcal{U}([-1, -0.2] \cup [0.2, 1])$ and will be  reweighted to ensure stationary. The range around $0$ is removed to ensure that the data is faithful to the graph. Following \citep{runge2019detecting}, each nonlinear dynamic's function $s^k_{ij}$ has a structure similar to:
    \begin{equation*}
        f_1(x) = x (1 + 4 e^{\frac{-x^2}{2}})
    \end{equation*}\break
    \begin{equation*}
        f_2(x) = x (1 + 4x^3 e^{\frac{-x^2}{2}}) .
    \end{equation*}

\begin{figure}
    \centering
    \includegraphics[width=0.6\textwidth]{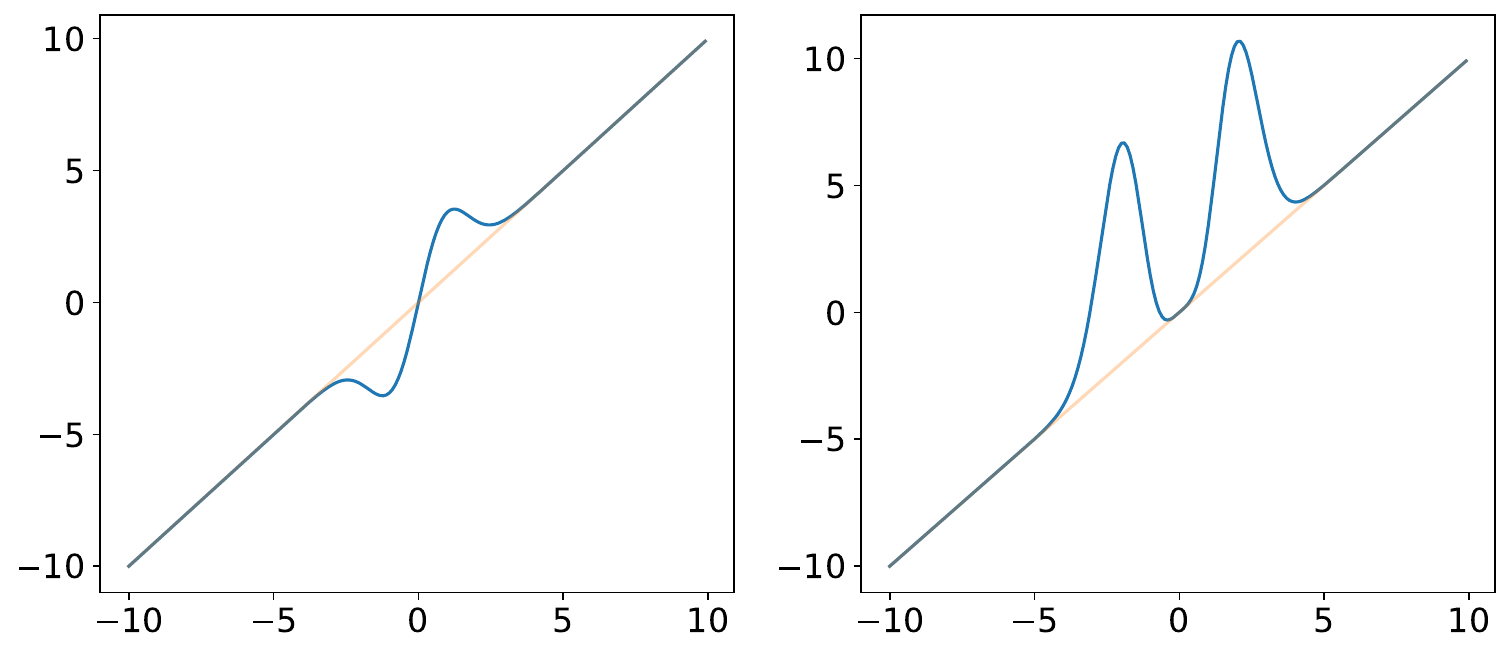}
    \caption{Examples of nonlinear functions used for the dynamics. For large values, the functions behave as linear functions.}
    \label{fig:nonlinear_dynamic_fct}
\end{figure}

See the code to see all the functions that were used. As it can be noticed in \cref{fig:nonlinear_dynamic_fct}, these functions are almost linear when $x$ is large. Then, to make the process stationary, we have to make sure that the linear process that has the same coefficient matrix $A$ would be stationary. One way to verify this is to make sure that all the eigenvalues of the following matrix $H$ have a modulus of less than one:
\begin{equation}
H = \begin{bmatrix}
A^{\tau} & A^{\tau-1} & \dots & A^{0} \\
I_\tau & 0 & \dots & 0 \\
0 & \dots & 0 & \vdots \\
0 & 0 & I_\tau & 0
\end{bmatrix}
\end{equation}

To do so, we compute the spectrum $\rho(H)$  and we divide each matrix $A^{k}$ by $\rho(H)^{k + 1}$. The resulting process will be stationary \citep{li2009single}. 

For the \textit{linear} datasets, we follow the same sampling process except that $s^k_{ij}$ is the identity.

\textbf{3- Graph F}.
To generate the observable $\bx$ from $\bz$, we first sample the matrix $W \in \mathbb{R}_{\geq 0}^{d_x \times d_z}$ which is non-negative and orthogonal following these steps: 
\begin{enumerate}
    \item We first sample an assignment matrix $M \in \{0, 1\}^{d_x \times d_z}$ where in each row, only one element is set to 1 and the rest is equal to 0. We also make sure that for each column, there is at least one element equal to 1.
    \item We sample independently $A_{ij} \sim \mathcal{U}([0.2, 1])$ and mask it: $\tilde{W} := M \odot A$.
    \item Finally, we normalize the column of $W$ to ensure that it is orthogonal. In other words: $W_{: j} := \frac{\tilde{W}_{: j}}{|| \tilde{W}_{: j} ||_2}$.
\end{enumerate}

\textbf{4- Decoding functions.}
 In the nonlinear case, each function $r_j$ (from \cref{sec:objective} \S3) was randomly sampled either as a linear function or as a nonlinear function. The nonlinear function takes the following general form:
\begin{equation*}
    f(x) = ax (2\sigma(10x) - 1),
\end{equation*}\break
where $a \sim \mathcal{U}[0.2, 0.7]$. This nonlinear function is approximately an absolute value function but is still differentiable since it is smooth at $x = 0$. In the linear case, each function $r_j$ is the identity. We sample the observable $\bx$ following:
\begin{equation*}
    x^t_j \mid \bz^t \sim \mathcal{N}(r_j(W\bz^t), \sigma^2_j),
\end{equation*}
where $\sigma^2_j = 0.1$ and $0.5$ in experiments with linear and nonlinear decoding, respectively.

\subsection{Real-world Datasets}
\label{app:realworld_datasets}
Here, we describe how the sea-level pressure data was regridded to an icosahedral-hexagonal grid to achieve an equal-area projection \citep{majewski1998new, majewski2002operational}. This form of regridding is motivated by the fact that the poles are singularities in the traditional longitude-latitude grid, i.e. that each grid cell contains a different amount of total surface area (for $1^\circ \times 1^\circ$ resolution, the length of a grid cell decreases from $\approx111$ km at the equator to $0$ km at the poles). This means that more grid cells represent a smaller area the further we move from the equator towards the south or north poles, leading to an overrepresentation of polar regions and an underrepresentation of equatorial regions. One way to address this issue is by projecting the data on a geodesic grid, such as the icosahedral-hexagonal grid \citep{sahr2003geodesic}, as nowadays used in climate models (e.g. ICON, see \citet{pham2021icon}). By regridding the data we can ensure that northern/southern weather regions are not overrepresented in the clustering and during the causal discovery process.

The sea-level pressure data was projected to the GME icosahedral grid \citep{majewski2002operational}, with first-order conservative remapping \citep{jones1999first} and the number of intervals $NI = 24$ to match the original resolution of $2.5^\circ \times 2.5^\circ$ at the equator as closely as possible. The original netCDF4 input files had to be converted to GRIB2 files for this purpose since netCDF4 files can only store quadrilateral data, whereas GRIB2 has no such restrictions \citep{dey2007guide}. After making the original data GRIB2 compliant and converting it to this format, Climate Data Operators (CDO) \citep{schulzweida_uwe_2022_7112925}, was used to remap the longitude-latitude gridded data to the GME icosahedral-hexagonal grid. Two parameters are relevant during the regridding process: 1) $NI$, the number of intervals, and 2) the remapping function. The number of intervals was chosen in a manner to resemble the resolution of the original data at the equator, and in a manner to allow the recursive generation of bisected equilateral triangles \citep{wang2011geometric}. The remapping function was chosen to ensure a monotonic remapping that can deal with a fine-to-coarse setting. First-order or second-order conservative remapping seem to be the best choices for that and we decided to use first-order conservative remapping since it is sufficiently accurate and easier to handle. The regridded sea-level pressure files in GRIB2 format can be easily loaded with the xarray package in Python \citep{hoyer2017xarray}.

\begin{figure}
    \centering
    \includegraphics[width=0.6\textwidth]{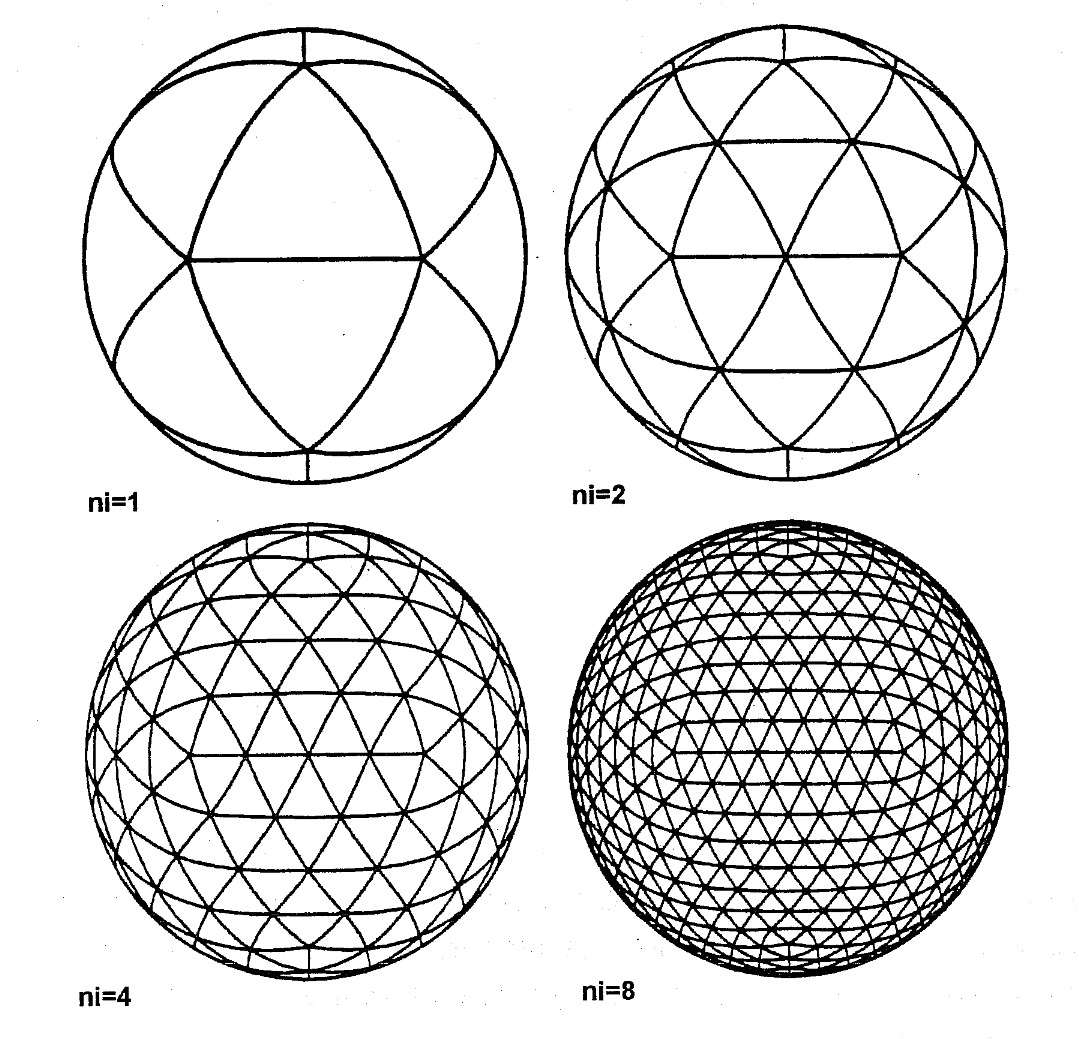}
    \caption{Visualization of icosahedral-hexagonal grids with different $NI$'s (number of intervals on triangle edge). the grid is generated recursively by halving the triangles, resulting in new, finer triangles. The figure stems from \citet{majewski1998new}.}
    \label{fig:icsoahedral_grid}
\end{figure}

\section{Methods and implementation details} \label{app:implementation}

\subsection{Varimax-PCMCI}
Varimax-PCMCI is a two-step method where 1) a dimensionality reduction is conducted to obtain time series of latent variables and then 2) a causal discovery method is applied on the latent level time series. We follow \citet{tibau2022spatiotemporal} by using PCA with a Varimax-rotation as the dimensionality reduction method. This method was observed empirically in \citet{tibau2022spatiotemporal} to lead to better results. For more details on PCA-Varimax, see \cref{app:pca-varimax}. For the causal discovery algorithm we use PCMCI+~\citep{runge2020discovering} which is an update to PCMCI, that also supports instantaneous connections. For the conditional independence tests used by PCMCI+, we use the partial correlation test for linear dynamics and the CMI-knn test~\citep{runge2018conditional}, which is a test based on a nearest-neighbor estimator of conditional mutual information, for nonlinear dynamics. We use the implementation from \url{https://github.com/jakobrunge/tigramite} and \url{https://github.com/xtibau/savar/tree/master/savar}.%

The significance level in PCMCI+ was set to $\alpha_{\rm PC}=0.2$ and the time lags of causal links were restricted to $\tau \geq 1$, hence, only lagged links are considered. PCMCI+ then still slightly differs from PCMCI. Both share the PC$_1$ phase that detects lagged (supersets) of parents, but they differ in the second phase because the MCI tests are only restricted to the superset of parents found in the first phase in PCMCI+~\citep{runge2020discovering}, whereas all lagged links are tested again in case of PCMCI.

\subsection{\method{}}
For all neural networks, we use leaky-ReLU as activation functions. For the neural networks $g_j$ fitting the nonlinear dynamic, we used MLPs with 2 hidden layers and 8 hidden units. For the neural network $r_j$ fitting the nonlinear encoding, instead of having $d_x$ functions, we use some parameter sharing in order to keep the number of parameters low. We use a single neural network that receives as input the masked $W\bz^t$ and an embedding (dimension 10) of the index $s(j)$ is concatenated to the input. This neural network has 2 hidden layers and 32 hidden units. The log of the variance terms and the matrix $W$ are free parameters initialized respectively to $-4$ and $\mathcal{U}[\frac{1}{10d_z}, \frac{1}{d_z}]$. The parameters $\Gamma$ are initialized to $5$ which corresponds to an almost full graph (i.e. $\sigmoid(\Gamma) \approx \bm{1}$). We use the optimizer RMSProp~\citep{hinton2012neural} with a learning rate of $1e-3$ and batch size of $64$.

\subsection{MCC Metric} \label{app:metrics}

As stated in the main text, the mean coefficient correlation (MCC) is a metric commonly used in causal representation learning to assess the quality of the learned representation. It is necessary since the identifiability result is up to permutation. We use an implementation from \citet{ice-beem20} (\url{https://github.com/ilkhem/icebeem/blob/master/metrics/mcc.py}). This corresponds to calculating the Pearson correlation between the learned representation $\hat{\bz}$ and the ground-truth $\bz$ under all possible permutations, selecting the permutation $\pi$ leading to the highest score, and taking its mean. To calculate the SHD between the learned graph $\hat{G}$ and $G$, we first apply the permutation $\pi$ to the learned graph.

\section{Hyperparameter Search} \label{app:hp_search}

For all experimental conditions, we use the default hyperparameters specified in Table~\ref{tab:hp} and \ref{tab:hp_pcmci}. These values were determined based on the SHD from a few experiments on distinct synthetic datasets than those used for evaluation. The regularisation coefficient (for the graph sparsity of CDSD) and the alpha term (the significance threshold for the conditional independence tests of Varimax-PCMCI) have a bigger impact in terms of performance and accordingly they vary per dataset. For both methods, they have been tested respectively on the log scale $[-3, 1]$ and $[-11, -1]$. For these values, for each experimental condition, 10 datasets (distinct from evaluation) were used each with 10 different values of regularisation and alpha. The average of the parameter value leading to the best SHD of the 10 datasets was used for the synthetic experiments. As stated earlier, both $d_z$ and $\tau$ are given when using synthetic datasets. In the linear dynamics case, Varimax-PCMCI uses the partial correlation test and CDSD uses neural networks $g_j$ without hidden layers. In the nonlinear dynamics case, Varimax-PCMCI uses the CMI-knn test and CDSD uses neural networks $g_j$ without 2 hidden layers and 8 hidden units.

For the real-world dataset, we reused the same default hyperparameters. For the regularisation coefficient, we selected it based on the mean-square error between the prediction $\hat{\bx}^t$ and $\bx^t$ on a held-out testing set. The split ratio is 0.8 for the training set and 0.2 for the testing set.

\begin{table}[h]
\centering
\caption{Default hyperparameters for \method{}}
\label{tab:hp}
\begin{tabular}{l}
\toprule
    \method{} hyperparameters \\ \midrule
    \begin{tabular}[c]{@{}l@{}}
    
    \textbf{ALM parameters}: \\
    threshold: $10^{-4}$,
    $\mu_0$: $10^{-3}$,
    $\gamma_0$: 0,
    $\eta$: 2,
    $\delta$: 0.9 \\
    \textbf{Optimizer}: \\
    RMSProp, learning rate: $10^{-3}$, batch size: $64$ \\
    \textbf{Transition NN ($g_j$):} \\
    Nonlinearity: Leaky-ReLU, \\
    \# hidden layers: [linear = 0, nonlinear = 2], \\
    \# hidden units: 8 \\
    \textbf{Encoder/Decoder NN for nonlinear encoding:} \\
    Nonlinearity: Leaky-ReLU, \\
    \# hidden layers: 2, \\
    \# hidden units: 32 \\
    
    \end{tabular} \\ 
\bottomrule                                         
\end{tabular}
\end{table}

\begin{table}[h]
\centering
\caption{Default hyperparameters for PCMCI+}
\label{tab:hp_pcmci}
\begin{tabular}{l}
\toprule
    PCMCI+ hyperparameters \\ \midrule
    \begin{tabular}[c]{@{}l@{}}
    
    $\alpha_{PC}$ = 0.2 \\
    $\tau_{min} = 1$ \\
    CI test: linear = partical corr., nonlinear = CMI-knn
    
    \end{tabular} \\ 
\bottomrule                                         
\end{tabular}
\end{table}

\section{Additional experiments} \label{app:add_experiments}

\subsection{Running Time} \label{app:running_time}
We show the running time of \method{} and Varimax-PCMCI as a function of the number of samples and the number of latent variables. Note that the methods have not been highly optimized and that these running times are only given to give a general idea and assess the trends of growth. Varimax-PCMCI is much faster than \method{} when using partial correlation as the conditional independence test (see the left panels of \cref{fig:time_wrt_n} and \ref{fig:time_wrt_dz}). However, for the nonlinear dynamics, \method{} is much faster, whereas Varimax-PCMCI often requires more than 24 hours to run. We only included cases where the time was under 24 hours (see the right panels of \cref{fig:time_wrt_n} and \ref{fig:time_wrt_dz}). 

In Figure~\ref{fig:time_wrt_n}, it can be observed that for linear dynamics, the running time is almost constant in function of the number of samples. However, for the nonlinear dynamics, the time for Varimax-PCMCI increases steeply and is $>24$h for $5000$ samples. In Figure~\ref{fig:time_wrt_dz}, it can be observed that, as expected, the running time increases as the number of latent variables of the model increases. The same pattern can be observed for Varimax-PCMCI: while the running time is extremely low for linear functions, it is very high and only below 24 hours when 5 latent variables (with 5000 samples) are considered.

All experiments were run on AMD EPYC 7742 2.25GHz 64-Core Processor with 40G of RAM. We present in \cref{tab:running_time} the total running time for each experiment.

\begin{table}[h]
\centering
\caption{Running time for each set of experiments}
\label{tab:running_time}
\begin{tabular}{l}
\toprule
    Synthetic experiments \\ \midrule
    \begin{tabular}[c]{@{}l@{}}

    \textbf{Linear dynamics, linear decoder:} \\
    CDSD: 76 hours \\
    Varimax-PCMCI: 0.3 hours \\
    \textbf{Nonlinear dynamics, linear decoder:} \\
    CDSD: 116 hours \\
    Varimax-PCMCI: 1212 hours \\
    \textbf{Linear dynamics, nonlinear decoder:} \\
    CDSD: 383 hours \\
    Varimax-PCMCI: 0.03 hours \\
    \textbf{Ablation study:} 4.3 hours \\
    \textbf{Real-world experiments:} 75 hours\\
    
    \end{tabular} \\ 
\bottomrule                                         
\end{tabular}
\end{table}

\begin{figure}[t]
    \centering
    \includegraphics[width=0.8\textwidth]{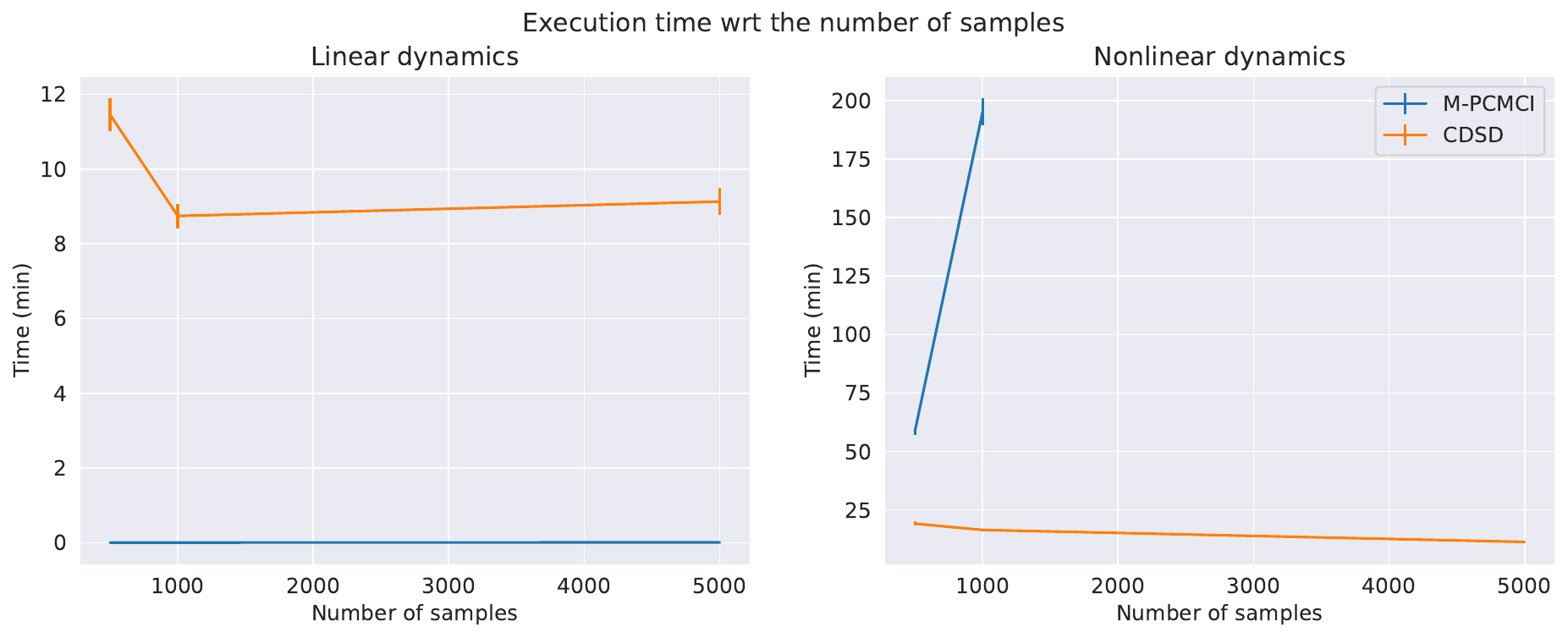}
    \caption{\label{fig:time_wrt_n} Comparison of the running time with respect to the number of samples in linear and nonlinear dynamics settings. }
\end{figure}

\begin{figure}[t]
    \centering
    \includegraphics[width=0.8\textwidth]{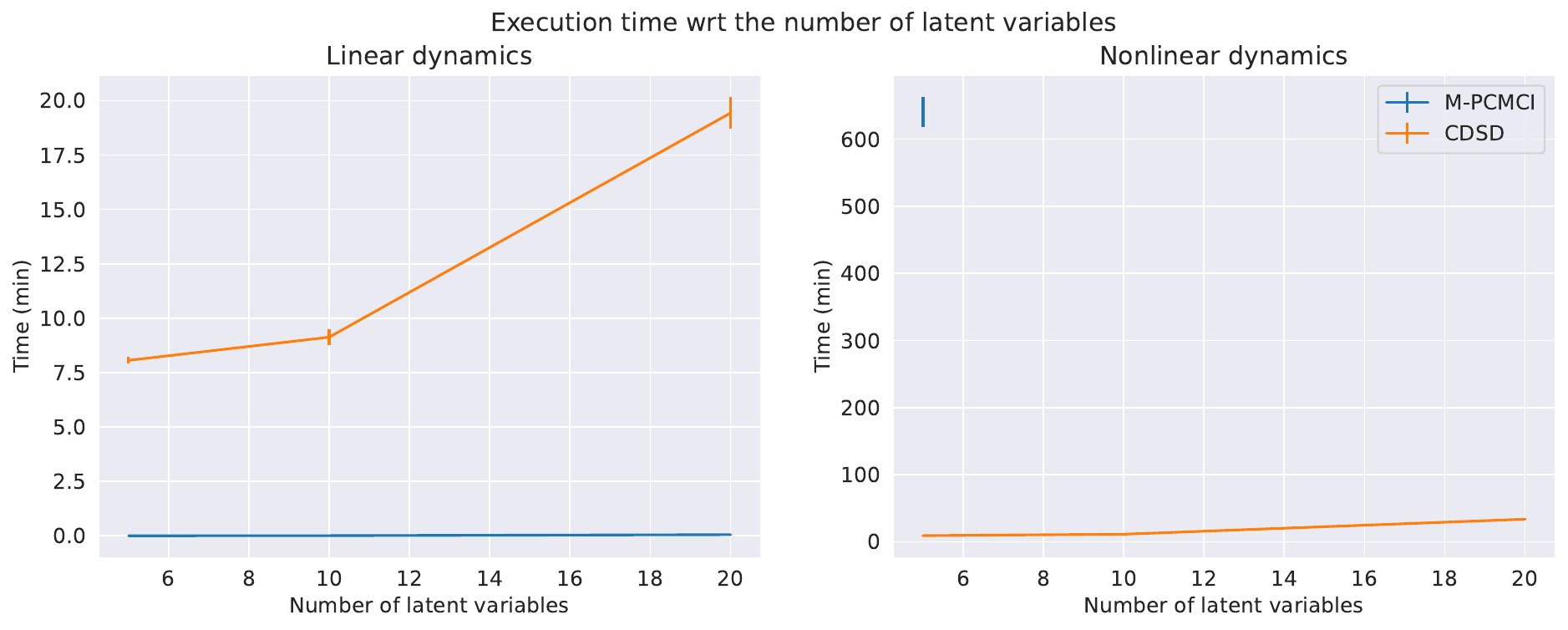}
    \caption{\label{fig:time_wrt_dz} Comparison of the running time with respect to the number of latent variables in linear and nonlinear dynamics settings. }
\end{figure}

\subsection{Synthetic Experiments} \label{app:add_results_synthetic}
We show in \cref{fig:shd_nonlinear} the results of \method{} on all the different experimental conditions in the nonlinear dynamics case. These results were not included in the main text since, as explained in the previous section, Varimax-PCMCI had a running time too high in most conditions. It can be observed that overall the nonlinear dynamics seems to be a more challenging task than its linear counterpart. Otherwise, the general trends are similar (improvement with the number of samples, deterioration as the other parameters increase).

\begin{figure}[t]
    \centering
    \includegraphics[width=0.8\textwidth]{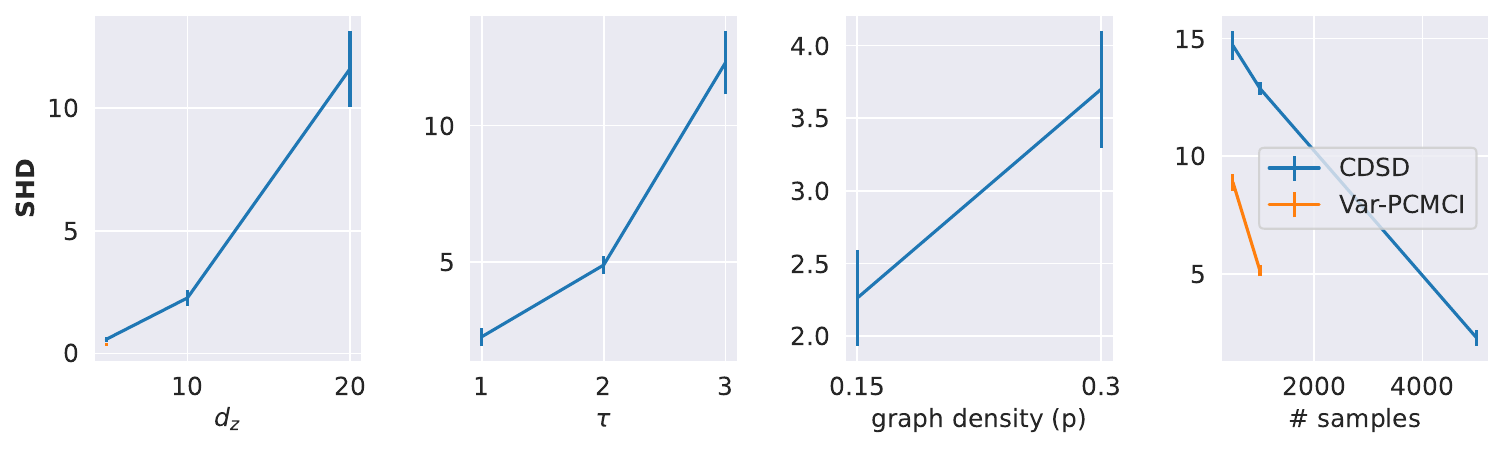}
    \caption{\label{fig:shd_nonlinear} Comparison in terms of SHD (lower is better) on nonlinear dynamics datasets. }
\end{figure}

\subsection{Real-world Experiment} \label{app:realworld_experiment}
In this section, we show additional results on the real-world dataset, namely we show an alternate result using a lower regularisation, we show more clearly the learned clusters and we show additional results using a lower number of latents.

In \cref{fig:realworld_denser} we show an alternate result that had a similar validation loss, but learned a denser graph. Overall, the conclusions remain the same. It can be observed that the learned spatial aggregation is nearly identical to the one in the main text. The graph is denser, but the same general pattern is observed: $G^1$ is denser, the diagonal has edges, etc. Furthermore, a similar pattern can be observed between the nodes related to ENSO with two additional edges. 

In the main text, we claimed that the learned clusters are localized. Since in the main figure of the real-world application (\cref{fig:realworld}) some neighboring clusters have very similar colors, for clarity, we show in \cref{fig:realworld_clusters} all the learned clusters separately. It can be observed that all the clusters are well localized even if this is not directly enforced in \method{}. In contrast, if CDSD without any constraints on W is used, the learned clusters are not localized (\cref{fig:realworld_no_constraint} and \cref{fig:realworld_no_constraint_clusters}) and regions specific to ENSO are now all in a unique cluster.

\begin{figure}[t]
    \centering
    \includegraphics[width=0.95\textwidth]{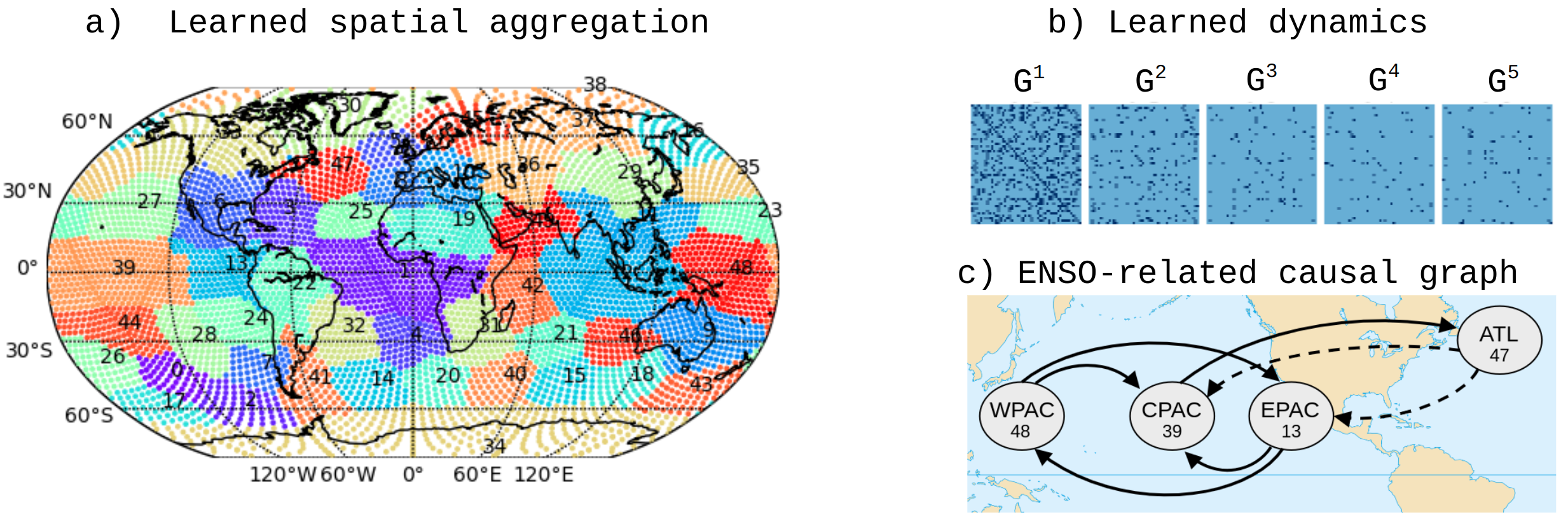}
    \caption{\label{fig:realworld_denser} Spatial partitioning learned by \method{} using a lower regularisation. The dashed lines in the causal graph represent edges that were not present in the main text.}
\end{figure}

\begin{figure}[t]
    \centering
    \includegraphics[width=0.95\textwidth]{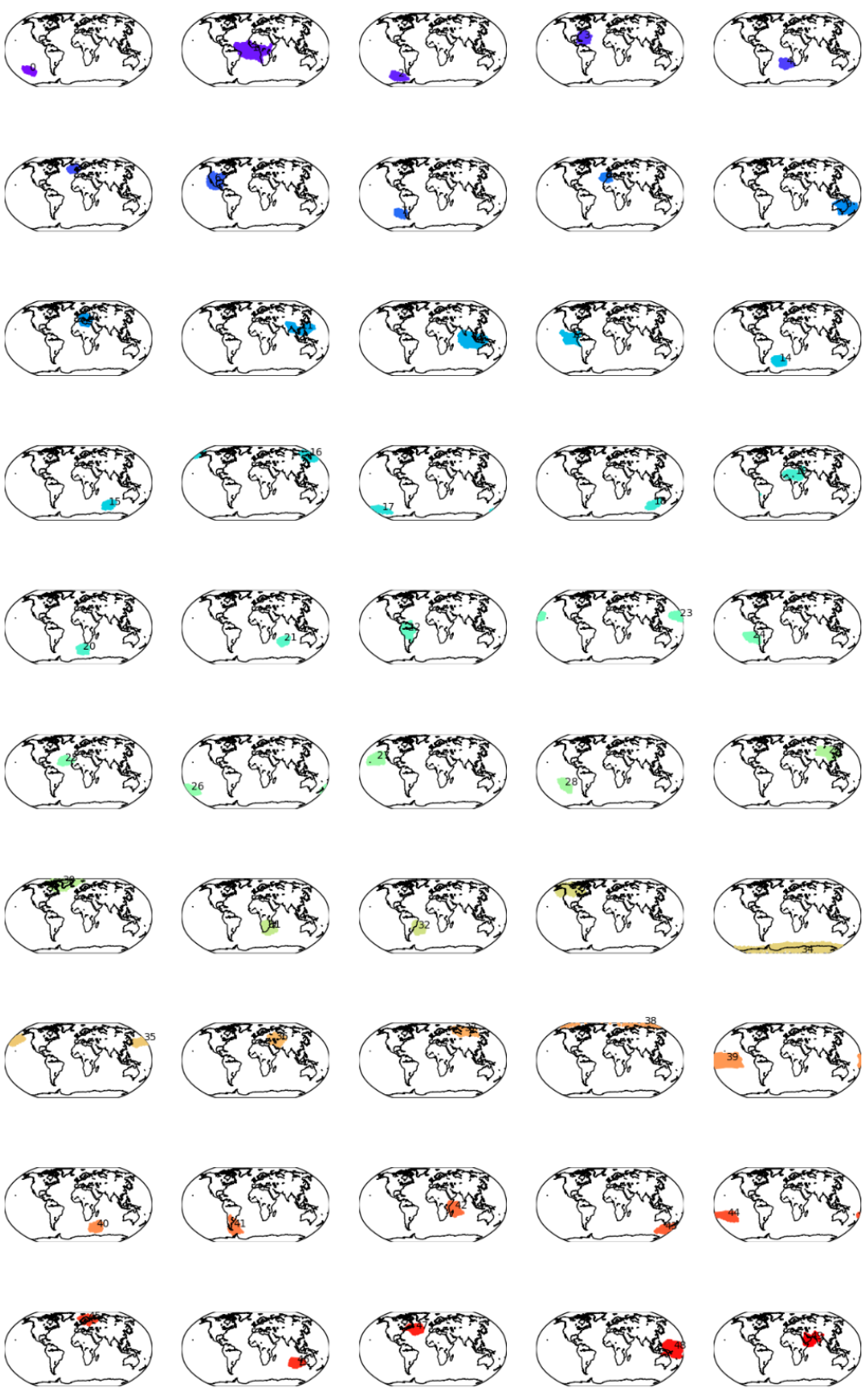}
    \caption{\label{fig:realworld_clusters} Representation of each region learned by \method{}.}
\end{figure}

We also investigated using a number of latent relatively smaller ($d_z$ = 20). Fifty latents were chosen in the main text since it is close to what has been used in similar studies (60 were used in \citep{runge2019detecting}). In \cref{fig:realworld_dz20} we present the learned clustering. It can be seen that the coarser aggregation has many regions that are the union of the one observed in \cref{fig:realworld}, however, interestingly, the regions related to ENSO (WPAC, CPAC, EPAC, but not ATL) are still separated. Contrary to using fifty latents, some regions are composed of disconnected regions potentially indicating that the number of latent variables is too low, see \cref{fig:realworld_dz20_clusters}. 

\begin{figure}[t]
    \centering
    \includegraphics[width=0.6\textwidth]{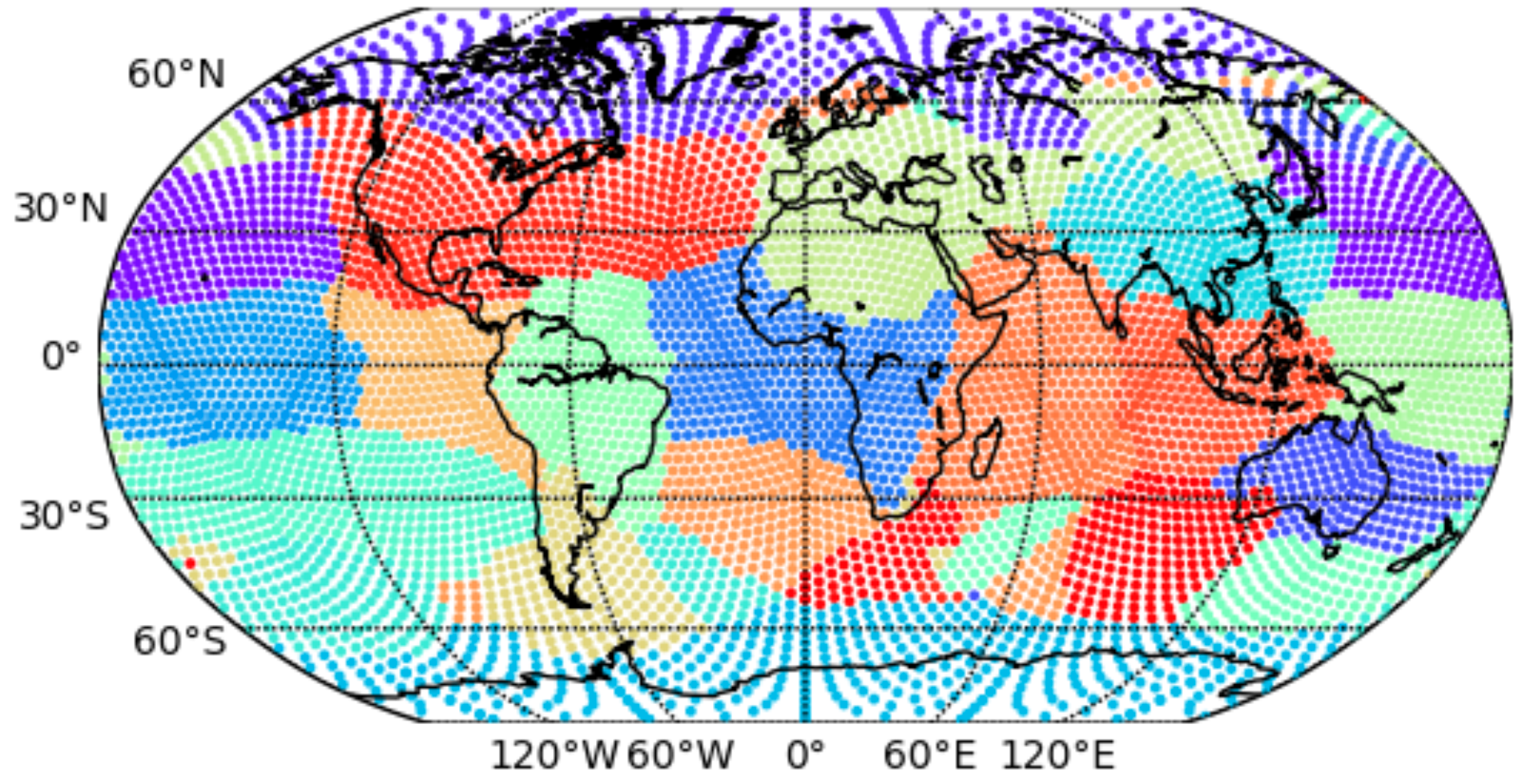}
    \caption{\label{fig:realworld_dz20} Spatial partitioning learned by \method{} using a low number of latents ($d_z = 20$).}
\end{figure}

\begin{figure}[t]
    \centering
    \includegraphics[width=0.95\textwidth]{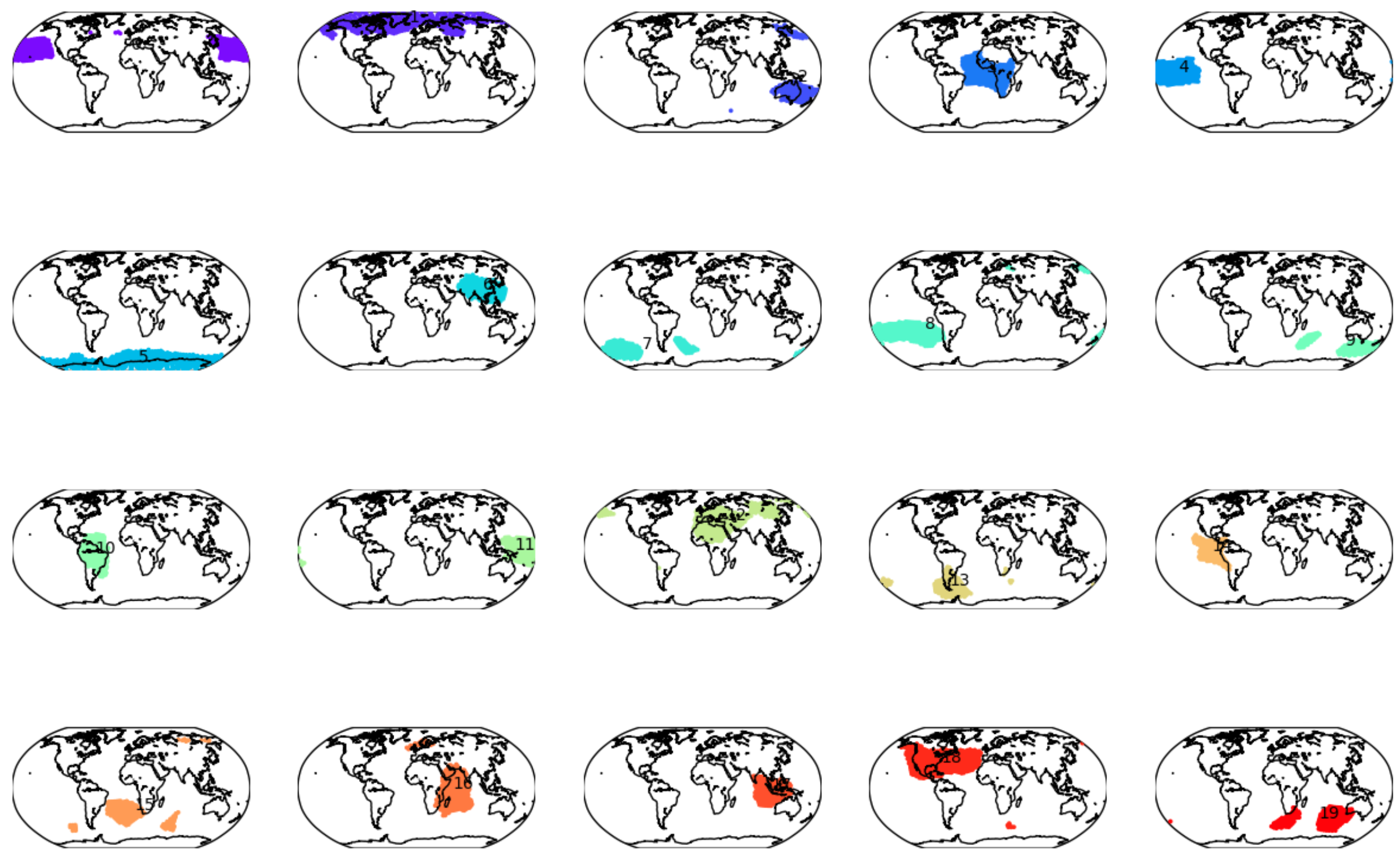}
    \caption{\label{fig:realworld_dz20_clusters} Representation of each region learned by \method{} using a low number of latents ($d_z = 20$).}
\end{figure}

\begin{figure}[t]
    \centering
    \includegraphics[width=0.6\textwidth]{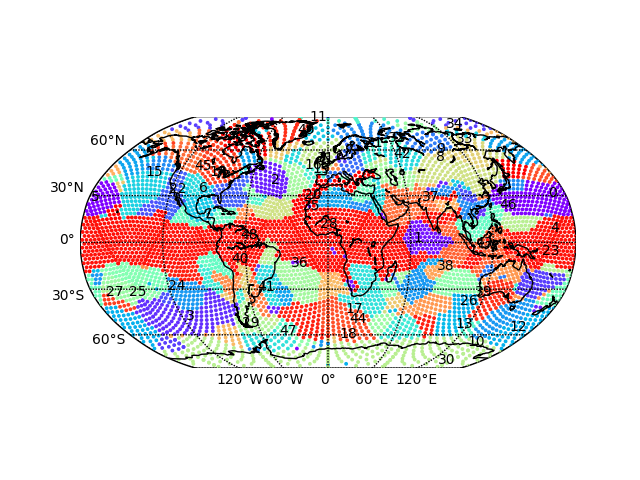}
    \caption{\label{fig:realworld_no_constraint} Spatial partitioning learned by \method{} without constraint on $W$ ($d_z = 50$).}
\end{figure}

\begin{figure}[t]
    \centering
    \includegraphics[width=0.95\textwidth]{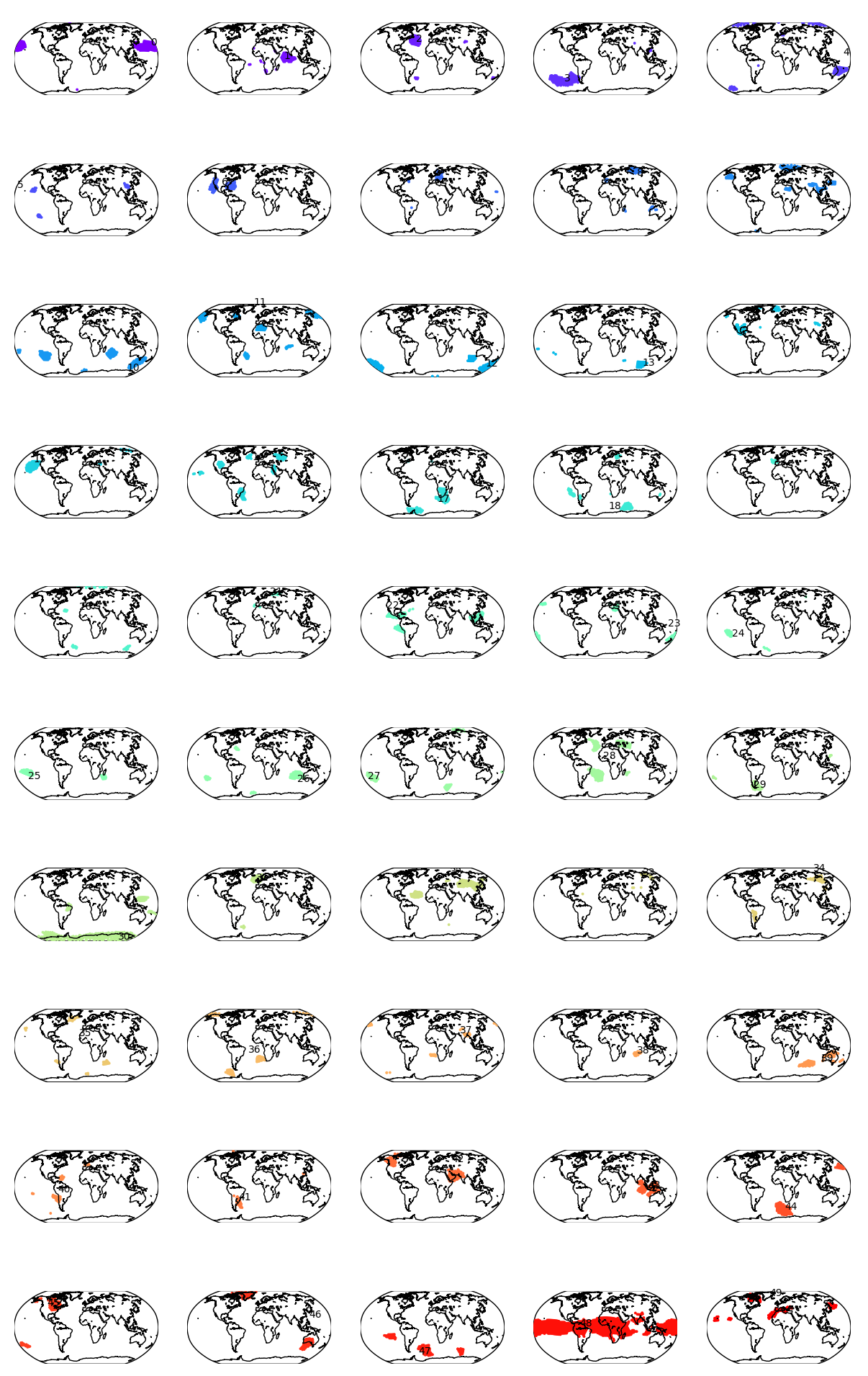}
    \caption{\label{fig:realworld_no_constraint_clusters} Representation of each region learned by \method{} without constraint on $W$ ($d_z = 50$).}
\end{figure}

\section{Extensions of \method{}}

\subsection{Using CDSD in Multivariate and Multi-Subject settings}
\label{app:several_features}

\textbf{Multivariate case.} Researchers are often interested in multivariate problems where each feature might require different clustering. For the climate example, researchers might have several features of interest: sea-level pressure, temperature, precipitation, etc. While we only presented the univariate version of \method{}, it can easily be used as a multivariate method by slightly modifying $W$. 

Assume we have $d$ features of interest. Reusing the notation from Section 3, 
let $\bx^t_{ki}$ and $\bz^t_{kj}$ be the observable and latent variables at time $t$ pertaining to the $k$-th feature. Note that now the cardinality of $d^k_x$ and $d^k_z$ can vary in function of $k$. As a matter of fact, since they are different features, it might be adequate to model them at a different coarsening. We denote the total dimensionality by $d_x := \sum_{k=1}^d d^k_x$ and $d_z := \sum_{k=1}^d d^k_z$.

For a feature $k$ we assume the observables $\bx_k$ can only be related to the latent variables $\bz_k$. This amounts to blacklisting some edges in $W \in \mathbb{R}^{d_z \times d_x}$.  Thus, $W$ will be a block matrix of the form:

\begin{equation}
    \begin{bmatrix}
        W^{1} &  &  &  \\
          & W^{2}&  &  \\
          &  & \ddots& \\
          &  & & W^{k}\\
    \end{bmatrix},
\end{equation}

where $W^k \in \mathbb{R}^{d^k_z \times d^k_x}$ is the matrix defining the connections of the variable related to the feature $k$. Besides this change, the orthogonality constraint can be directly applied to each matrix $W^k$.

\textbf{Multiple subjects case.} In many scientific applications such as neurosciences, many subjects are observed and it is often assumed that they share common properties. For example, in a brain imaging application, one could assume that the brain regions are invariant, whereas the connectivity between these regions is patient-specific (as in \citet{monti2018unified}). Similarly to the multivariate case, we can use graphs with an extra dimension ($G \in \{0, 1\}^{d_p \times \tau \times d_z \times d_z}$ where $d_p$ is the number of patients) and blacklist connections along the dimension related to the patient.

\subsection{Supporting Instantaneous Causal Relationships} \label{app:instantaneous}
In order to support instantaneous relations, we add two components: 1) a graph $G^0$ and the parameters $\Gamma^0$ to the generative model, and 2) an acyclicity constraint to the objective. Note that contrary to $G^1, \dots, G^\tau$, the graph $G^0$ has to be a \textit{directed acyclic graph} (DAG) in order to have a valid factorization. 
We have the following modified transition model:
\begin{equation}
    p(\bz^t \mid \bz^{< t}) := \prod_{j=1}^{d_z} p(z^t_{j} \mid \bz_{\textbf{pa}^{G^0}_{j}}, \bz^{<t}).
\end{equation}
Each conditional is:
\begin{equation} 
    p(z^t_{j} \mid \bz_{\textbf{pa}^{G^0}_{j}}, \bz^{<t}) := h(z^t_{j};\, g_j([G_{j:}^0 \odot \bz^{t}, G_{j:}^1 \odot \bz^{t-1}, \dots, G_{j:}^\tau \odot \bz^{t - \tau}] )\,).
\end{equation}
The observation and the density models remain the same. For the objective, we add the acyclicity constraint \citep{zheng2018dags} on $\Gamma^0$:
\begin{equation}
    \text{Tr}(e^{\sigma(\Gamma^0)}) - d_z = 0.
\end{equation}
This constraint formulation is continuous and thus differentiable. We can then use the quadratic penalty method to optimize this constrained problem \citep{ng2022convergence}. However, in this case, the identifiability of the graph $G$ is not guaranteed: at best the Markov Equivalence Class can be recovered or additional assumptions are needed such as assuming an additive noise model.

\subsection{Supporting Interventions}
\label{app:interventions}
In order to support interventions on $\bz$, the model can be adapted following \citet{brouillard2020differentiable, gao2022idyno, lei2022variational}. Assume that we have interventional data from $K$ different interventions. The $k$-th interventional distribution of the transition model $p^{(k)}(\bz^t \mid \bz^{<t})$ with the intervention target $\mathcal{I}_k \subseteq [d_z]$ is given by:
\begin{equation}
p^{(k)}(\bz^t \mid \bz^{< t}) := \prod_{j \in \mathcal{I}_k} p^{(k)}(z_j^t \mid \bz^{<t}) \prod_{j \notin \mathcal{I}_k} p(z_j^t \mid \bz^{<t}) ,
\end{equation}
where $p^{(k)}$ are conditionals different from the observational conditionals except for $k = 0$ where $\mathcal{I}_0 := \emptyset$. Since there are no interventions on $\bx$, the conditional $p(\bx^t \mid \bz^t)$ remains the same as in the observational case. The joint interventional distribution is:
\begin{equation}
    p^{(k)}(\bx^{\leq T}, \bz^{\leq T}) := \prod_{t=1}^T p^{(k)}(\bz^t \mid \bz^{<t}) p(\bx^t \mid \bz^t)\, .
\end{equation}

In terms of ELBO, we now have:
\begin{align}
    \log p^{(k)}(\bx^{\leq T}) \geq \sum_{t=1}^T \Bigl[ \mathbb{E}_{\bz^t \sim q(\bz^t \mid \bx^t)}\left[\log p(\bx^t \mid \bz^t)\right] - \\
    \mathbb{E}_{\bz^{<t} \sim q(\bz^{<t} \mid \bx^{<t})} \text{KL}\left[q(\bz^t \mid \bx^t) \,||\, p^{(k)}(\bz^t \mid \bz^{< t})\right] \Bigr].
\end{align}

We denote this ELBO as $\mathcal{L}_{\bx}^{(k)}$. Finally, the objective changes to:
\begin{equation}
\begin{aligned}
    & \max_{W, \Gamma, \bm{\phi}} \sum_{k = 0}^K \mathbb{E}_{G \sim \sigma(\Gamma)} \bigl[\mathbb{E}_{\bx \sim p^{(k)}}\left[\mathcal{L}^{(k)}_{\bx}(W, \Gamma, \bm{\phi}) \right] \bigr] - \lambda_s ||\sigma(\Gamma)||_1 \\
    & \text{s.t.} \ \ W \ \text{is orthogonal and non-negative} ,
\end{aligned}
\end{equation}
where $\bm{\phi}$ is now augmented with different parameters for each conditional $p^{(k)}$.

\section{PCA-Varimax} \label{app:pca-varimax}
\subsection{PCA}
PCA is a commonly used dimensionality reduction method that finds a linear mapping W between the data and a latent projection having a smaller dimensionality. It can be framed as finding the projection that maximizes the variance or, alternatively, as the projection that minimizes the reconstruction loss \citep{pearson1901liii, hotelling1933analysis}. If we consider the latter formulation, we can write the PCA method as the following optimization problem:
\begin{align*}
    L(\bx; W) = || \bx - WW^T\bx ||^2_2 \\
    \hat{W} \in \arg\!\!\!\!\!\!\!\!\!\!\!\!\min_{W s.t. W^TW = I_{d_z}} L(\bx; W), \\
\end{align*}
where $\bx \in \mathbb{R}^{d_x}$ and $W \in \mathbb{R}^{d_x \times d_z}$.

In that case, the matrix W is not identifiable since any rotation would lead to the same fit of PCA (as already noted by \citep{anderson1956statistical}). To see this, suppose that a rotation matrix $R \in \mathbb{R}^{d_z \times d_z}$ is applied to $W$. The resulting matrix $\tilde{W} = WR$ is still orthogonal (since orthogonal matrices are closed under multiplication) and leads to the same loss since $RR^T = I$:
\begin{align*}
    L(\bx; \tilde{W}) & = || \bx - \tilde{W}\tilde{W}^T\bx ||^2_2 \\    
         & = || \bx - WRR^TW^T\bx ||^2_2 \\
         & = || \bx - WW^T\bx||^2_2.
\end{align*}

\subsection{The Varimax Rotation} \label{app:varimax_identif}
In practice, researchers from many different fields, such as Earth science \citep{vejmelka2015non, tibau2022spatiotemporal}, use a criterion called Varimax \citep{kaiser1958varimax} in order to find a particular rotation of W. The optimal rotation according to this criterion is given by: 
\begin{equation}
R_{\text{VAR}} = \arg\max_{R} \Bigl(\frac{1}{d_z} \sum^{d_x}_{i=1}\sum^{d_z}_{j=1} (WR)^4_{ij} -  \sum^{d_x}_{i=1} \Bigl(\frac{1}{d_z} \sum^{d_z}_{j=1} (WR)^2_{ij}\Bigr)^2 \Bigr).
\end{equation}

While it leads to the same loss, this rotation has been shown empirically to lead to more interpretable variables \citep{johnson2002applied, ramsay2007applied} (i.e. the latent dimensions are recognized as important factors of variation by experts in the fields). The criterion corresponds to maximizing the sum of the variances of the squared loadings $W^2_{ij}$. Intuitively, this criterion will make the large loadings larger and the small loadings smaller making the rotated matrix sparse. The good representation learned by PCA-Varimax when $W$ respects the single-parent assumption could be explained by the identifiability results of \citet{zheng2022identifiability}. They show that, under a more general sparsity assumption than the single-parent assumption, $W$ can be identified using an adequate sparsity regularisation. 

\subsection{Empirical Validation} \label{app:pca_varimax_validation}
In this section, we show empirically that PCA-Varimax can learn a disentangled representation when the decoding function is linear and $W$ respects the single-parent assumption. The results stress the fact that PCA without the Varimax rotation generally leads to entangled representation. 

The PCA-Varimax algorithm used in Varimax-PCMCI can be decomposed in three steps:
\begin{itemize}
    \item \textbf{Step 1.} Apply PCA and recover a matrix $\hat{W}$.
    \item \textbf{Step 2.} Find the rotation matrix $R_{\text{VAR}}$ that maximizes the Varimax criterion. Apply this matrix to $\hat{W}$:
    \begin{align*}
        \hat{W}_{\text{VAR}} := \hat{W}R_{\text{VAR}}.
    \end{align*}
    \item \textbf{Step 3.} Apply a reflection matrix $S$:
    \begin{align*}
        \hat{W}^+_{\text{VAR}} := \hat{W}_{\text{VAR}}S,
    \end{align*}
    where $S_{jj} = \text{sgn}(\arg\max_i |W_{ij}|)$ and $S_{ij} = 0$ for all $i \neq j$. 
    
\end{itemize}

For this experiment, we do an ablation study of these three steps, namely we consider the three conditions: 1) PCA without rotation (PCA), 2) PCA with the Varimax rotation (PCA-Var), and 3) PCA with the Varimax rotation and reflection (PCA-Var+). We reuse the default synthetic datasets with the same alpha values that were chosen in the main hyperparameter search. For each condition, we present the average on 10 different datasets. 

In terms of representation, it can be seen in Figure~\ref{fig:pca_varimax} that the Varimax rotation is essential to learn a disentangled representation as observed by the MCC metric. Learning a good representation has a repercussion on the recovery of the graph $G$ as shown by the SHD metric. The reflection operation might flip signs but does not have any impact in terms of disentanglement. However, if we look at the recovered matrices $\hat{W}$ in terms of absolute error with $W$, the reflection allows us to recover a matrix more similar to $W$.

\begin{figure}[t]
    \centering
    \includegraphics[width=0.9\textwidth]{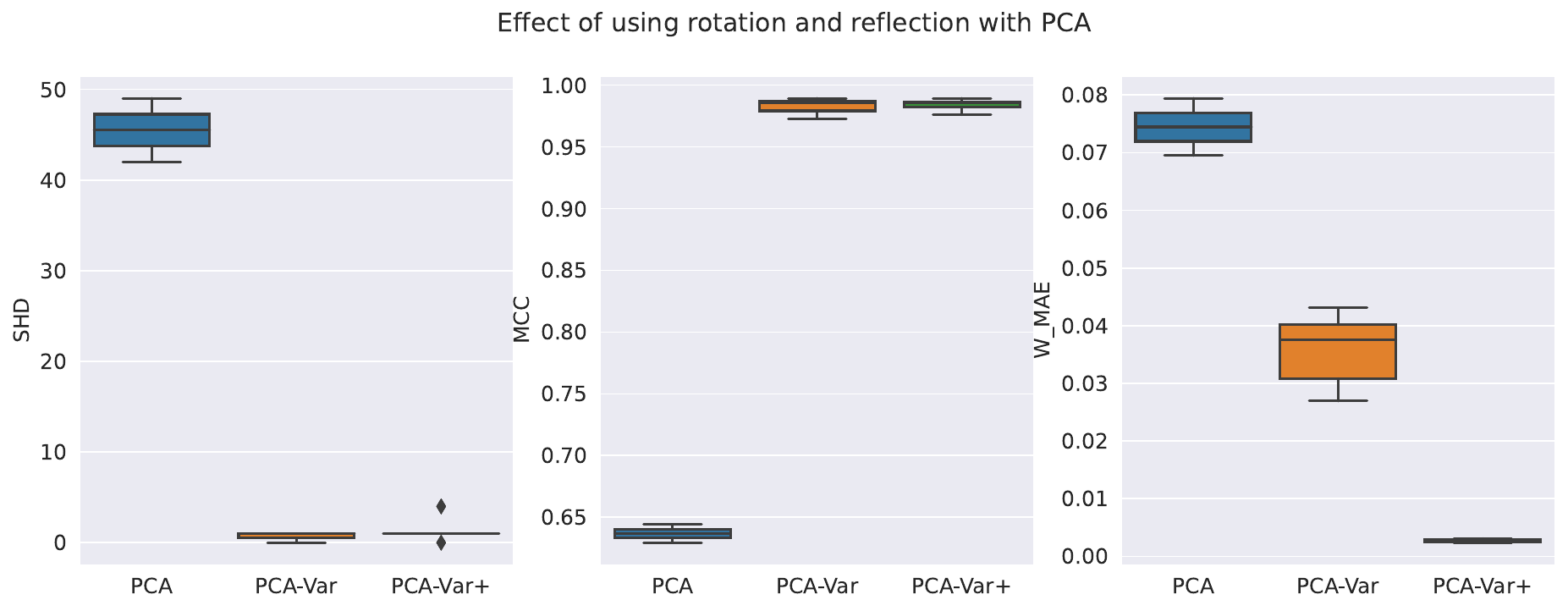}
    \caption{\label{fig:pca_varimax} Comparison of 1) PCA without rotation (PCA), 2) PCA with the Varimax rotation (PCA-Var), and 3) PCA with the Varimax rotation and reflection (PCA-Var+). The Varimax rotation is crucial to recover the graph (left panel) and a good representation (middle panel). In order to recover the matrix W, the reflection operation is necessary (right panel).}
\end{figure}

\end{document}